\documentclass{article}

% if you need to pass options to natbib, use, e.g.:
% \PassOptionsToPackage{numbers, compress}{natbib}
% before loading nips_2018

% ready for submission
% \usepackage{nips_2018}

% to compile a preprint version, e.g., for submission to arXiv, add
% add the [preprint] option:
% \usepackage[preprint]{nips_2018}
% \usepackage[preprint,nonatbib]{nips_2018}  % added nonatbib due to compile error in arxiv submission

% to compile a camera-ready version, add the [final] option, e.g.:
\usepackage[final,nonatbib]{nips_2018}

% to avoid loading the natbib package, add option nonatbib:
% \usepackage[nonatbib]{nips_2018}

\usepackage[utf8]{inputenc} % allow utf-8 input
\usepackage[T1]{fontenc}    % use 8-bit T1 fonts
\usepackage{hyperref}       % hyperlinks
\usepackage{url}            % simple URL typesetting
\usepackage{booktabs}       % professional-quality tables
\usepackage{amsfonts}       % blackboard math symbols
\usepackage{nicefrac}       % compact symbols for 1/2, etc.
\usepackage{microtype}      % microtypography

\title{Reparameterization Gradient \\
for Non-differentiable Models}

% The \author macro works with any number of authors. There are two
% commands used to separate the names and addresses of multiple
% authors: \And and \AND.
%
% Using \And between authors leaves it to LaTeX to determine where to
% break the lines. Using \AND forces a line break at that point. So,
% if LaTeX puts 3 of 4 authors names on the first line, and the last
% on the second line, try using \AND instead of \And before the third
% author name.

\author{
  Wonyeol Lee\quad\quad Hangyeol Yu\quad\quad Hongseok Yang\\
  School of Computing, KAIST\\
  Daejeon, South Korea\\
  \texttt{\{wonyeol, yhk1344, hongseok.yang\}@kaist.ac.kr}\\
  %% Coauthor \\
  %% Affiliation \\
  %% Address \\
  %% \texttt{email} \\
  %% \AND
  %% Coauthor \\
  %% Affiliation \\
  %% Address \\
  %% \texttt{email} \\
  %% \And
  %% Coauthor \\
  %% Affiliation \\
  %% Address \\
  %% \texttt{email} \\
  %% \And
  %% Coauthor \\
  %% Affiliation \\
  %% Address \\
  %% \texttt{email} \\
}

%auto-ignore

%%% font
%\usepackage{palatino}
%\usepackage{courier} % standard fixed width font

%%% math
\usepackage{amsmath, amsfonts, amsthm, amssymb}
\usepackage{dsfont}  % \mathds{..}
\usepackage[italicdiff]{physics} % \abs{..}, \dv{..}, \pdv{..}
\usepackage{proof}   % \infer{..}
\usepackage{bm}      % \bm{..}
\usepackage{scalerel,stackengine} % \savestack{...}{...}

%%% formatting
%\usepackage{fancyhdr}
\usepackage{verbatim, color, xcolor}
\usepackage{subcaption, graphicx} % \subfigure{..}

%%% itemize, listings
\usepackage{enumitem}  %% itemize setting
%\usepackage{listings}

%%% macro
%\usepackage{xifthen} % \isempty{..}

%%%%%%%%%%%%%%%%%%%%%%%%%%%%%%%%%%%%%%%%%%%%%%%%%%%

%% %%% fancyhdr
%% \renewcommand{\headrulewidth}{0pt}
%% \pagestyle{fancy}
%% \fancyhf{}
%% \cfoot{\thepage}
%% \rfoot{\today}

%%% enumitem
\setlist[enumerate]{itemsep=0pt, leftmargin=15pt} %- for setting enumerate % label=(\alph*)
\setlist[itemize]  {itemsep=0pt, leftmargin=15pt} %- for setting enumerate % topsep=0pt
\setlist[itemize,2]{label=$\circ$}
\setlist[itemize,3]{label=-}

%auto-ignore

%%% theorem envs

\newtheorem{theorem}{Theorem}

\newtheorem{proposition}[theorem]{Proposition}
\newtheorem{corollary}[theorem]{Corollary}

%%% colors

%\newcommand{\blue}{\color{blue}}
\definecolor{dgreen}{rgb}{0.0, 0.5, 0.0}
\definecolor{lime}{rgb}{0.9, 1.0, 0.6}       % color "lime"
\definecolor{redd}{rgb}{1.0, 0.5, 0.8}       % color "redd"

%%% bars

\renewcommand{\hat}{\widehat}
\renewcommand{\bar}[1]{\mkern 1.5mu\overline{\mkern-1.5mu#1\mkern-1.5mu}\mkern 1.5mu}
 % \hcancel[red]{E=mc^2}
% https://tex.stackexchange.com/a/20613
\stackMath
\newcommand\reallywidehat[1]{%
\savestack{\tmpbox}{\stretchto{%
  \scaleto{%
    \scalerel*[\widthof{\ensuremath{#1}}]{\kern-.6pt\bigwedge\kern-.6pt}%
    {\rule[-\textheight/2]{1ex}{\textheight}}%WIDTH-LIMITED BIG WEDGE
  }{\textheight}%
}{0.5ex}}%
\stackon[1pt]{#1}{\tmpbox}%
}
\parskip 1ex
% https://tex.stackexchange.com/questions/100574/really-wide-hat-symbolkj:wq

%%% sets
\newcommand{\N}{\mathbb{N}}

\newcommand{\R}{\mathbb{R}}

%%% symbols
\newcommand{\defeq}{\triangleq} % {\stackrel{\Delta}{=}}

\newcommand{\eps}  {\epsilon}

\newcommand{\tht}  {\theta}
\newcommand{\Tht}  {\Theta}
\newcommand{\ph}   {\leavevmode \kern.06em\vbox{\hrule width.8em}} % place holder

\newcommand{\intgl}[1]{\mathcal{L}^1({#1})}

%%% operators
\newcommand{\sgn}         [1]{\mathrm{sgn}({#1})}

\renewcommand{\grad}      [1]{\nabla_{#1}}
\renewcommand{\eval}      [1]{\Bigr|_{#1}}
\newcommand{\E}           [2]{\mathbb{E}_{#1}\left[{#2}\right]}

\newcommand{\indc}        [1]{{\mathds{1}\!\left[{#1}\right]}}

\newcommand{\bigO}        [1]{\mathcal{O}\left({#1}\right)}

\newcommand{\ELBO}        [1]{{\mathsf{ELBO}_{#1}}}
\newcommand{\ReparamGrad}    {\mathsf{RepGrad}}
\newcommand{\RegionChange}   {\mathsf{BouContr}}

%%% texts

\newcommand{\tand}  {\text{and}}

%\newcommand*{\gtxt}   {\colorbox{lime}}
%\newcommand {\geqn}[1]{\colorbox{lime}{$#1$}}
%\newcommand {\glns}[1]{\colorbox{lime}{\{}{#1}\colorbox{lime}{\}}} % for green lines
%\newcommand*{\code}{\texttt}

%%% user-defined

\newcommand{\Dnormal}{\mathcal{N}}

%%% comments

%\newcommand*{\hs}[1]{}
%\newcommand*{\wy}[1]{}
%\newcommand*{\hg}[1]{}

%%% experiment
% benchmarks

\newcommand{\tcl} {\mathtt{temperature}}
\newcommand{\sns} {\mathtt{textmsg}}
\newcommand{\infl}{\mathtt{influenza}}
% estimators
\newcommand{\score}{\textsc{Score}}
\newcommand{\repar}{\textsc{Reparam}}
\newcommand{\ours} {\textsc{Ours}}
% variance
\newcommand{\varcmp}{\mathrm{Avg}(\mathbb{V}(\cdot))} %{\mathbb{V}_{\mathrm{cmp}}}
\newcommand{\varnrm}{\mathbb{V}(\|\cdot\|_2)} %{\mathbb{V}_{\mathrm{nrm}}}

\begin{document}
% \nipsfinalcopy is no longer used

\maketitle

\begin{abstract}
We present a new algorithm for stochastic variational inference that targets at models with non-differentiable densities. One of the key challenges in stochastic variational inference is to come up with a low-variance estimator of the gradient of a variational objective. We tackle the challenge by generalizing the reparameterization trick, one of the most effective techniques for addressing the variance issue for differentiable models, so that the trick works for non-differentiable models as well. Our algorithm splits the space of latent variables into regions where the density of the variables is differentiable, and their boundaries where the density may fail to be differentiable. For each differentiable region, the algorithm applies the standard reparameterization trick and estimates the gradient restricted to the region. For each potentially non-differentiable boundary, it uses a form of manifold sampling and computes the direction for variational parameters that, if followed, would increase the boundary's contribution to the variational objective. The sum of all the estimates becomes the gradient estimate of our algorithm. Our estimator enjoys the reduced variance of the reparameterization gradient while remaining unbiased even for non-differentiable models. The experiments with our preliminary implementation confirm the benefit of reduced variance and unbiasedness. 
\end{abstract}

%auto-ignore

\section{Introduction}
Stochastic variational inference (SVI) is a popular choice for performing posterior inference in Bayesian machine learning. It picks a family of variational distributions, and formulates posterior inference as a problem of finding a member of this family that is closest to the target posterior. SVI, then, solves this optimization problem approximately using stochastic gradient ascent. One major challenge in developing an effective SVI algorithm is the difficulty of designing a low-variance estimator for the gradient of the optimization objective. Addressing this challenge has been the driver of recent advances for SVI, such as reparameterization trick~\cite{KingmaICLR14,RezendeICML14,RuizNIPS16,NaessethAISTATS17,KucukelbirJMLR2017}, clever control variate~\cite{RanganathAISTATS14,GuICLR16,GuICLR17,TuckerNIPS17,GrathwohlICLR18,MillerReparam2017}, and continuous relaxation of discrete distributions~\cite{MaddisonICLR17,JangICLR17}. 

%\hs{Double-check whether we don't want to mention REBAR, Monte-Carlo Objective, SMC-based algorithms, Stein something. They might not be directly related to variance reduction. Just tighter bound. Also, general citation on stochastic variational inference may be useful.}

Our goal is to tackle the challenge for models with non-differentiable densities. Such a model naturally arises when one starts to use both discrete and continuous random variables or specifies a model using programming constructs, such as if statement, as in probabilistic programming~\cite{GoodmanUAI08,Veture14,WoodAISTATS14,GordonICSE14}. The high variance of a gradient estimate is a more serious issue for these models than for those with differentiable densities. Key techniques for addressing it simply do not apply in the absence of differentiability. For instance, a prerequisite for the so called reparameterization trick is the differentiability of a model's density function. 

In the paper, we present a new gradient estimator for non-differentiable models. Our estimator splits the space of latent variables into regions where the joint density of the variables is differentiable, and their boundaries where the density may fail to be differentiable. For each differentiable region, the estimator applies the standard reparameterization trick and estimates the gradient restricted to the region. For each potentially non-differentiable boundary, it uses a form of manifold sampling, and computes the direction for variational parameters that, if followed, would increase the boundary's contribution to the variational objective. This manifold sampling step cannot be skipped if we want to get an unbiased estimator, and it only adds a linear overhead to the overall estimation time for a large class of non-differentiable models. The result of our gradient estimator is the sum of all the estimated values for regions and boundaries.

Our estimator generalizes the estimator based on the reparameterization trick. When a model has a differentiable density, these two estimators coincide. But even when a model's density is not differentiable and so the reparameterization estimator is not applicable, ours still applies; it continues to be an unbiased estimator, and enjoys variance reduction from reparameterization. The unbiasedness of our estimator is not trivial, and follows from an existing yet less well-known theorem on exchanging integration and differentiation under moving domain~\cite{FlandersAMM1973} and the divergence theorem. We have implemented a prototype of an SVI algorithm that uses our gradient estimator and works for models written in a simple first-order loop-free probabilistic programming language. The experiments with this prototype confirm the strength of our estimator in terms of variance reduction.

%The rest of the paper is organized as follows. In Section~\ref{sec:background}, we review stochastic variational inference and the reparameterization trick. Our estimator and its theoretical justifications are described in Section~\ref{sec:result}. The experiments with our prototype implementation are discussed in Section~\ref{sec:experiments}. The relationship with a large body of existing work and the concluding remarks appear in Sections~\ref{sec:related} and \ref{sec:conclusion}. Throughout the paper, we give examples that show why we made certain decisions about our estimator and what would go wrong if we did otherwise. Proofs of theorems and some counterexamples can be found in the supplementary materials accompanying this paper.

%\hs{Some introduction. Come back and revise. Typically SVI refers to stochastic gradient and subsampling. Be aware of this. Also, the normalizing flow paper says that the key problem in variational inference is to design a family of expressive yet tractable approximating distributions. So use careful wording when we emphasize the high-variance problem. We may decide to skip the overview part. It does not add much information.}

%auto-ignore

\section{Variational Inference and Reparameterization Gradient}
\label{sec:background}

Before presenting our results, we review the basics of stochastic variational inference. 

Let $\bm{x}$ and $\bm{z}$ be, respectively, observed and latent variables living in $\R^m$ and $\R^n$, and $p(\bm{x},\bm{z})$ a density that specifies a probabilistic model about $\bm{x}$ and $\bm{z}$. 
We are interested in inferring information about the posterior density $p(\bm{z} | \bm{x}^0)$
for a given value $\bm{x}^{0}$ of $\bm{x}$.

Variational inference approaches this posterior-inference problem from the optimization angle. It recasts posterior inference as a problem of finding a best approximation to the posterior among a collection of pre-selected distributions $\{q_\theta(\bm{z})\}_{\tht\in \R^d}$, called \emph{variational distributions}, which all have easy-to-compute and easy-to-differentiate densities and permit efficient sampling. A standard objective for this optimization is to maximize a lower bound of
$\log p(\bm{x}^0)$ called \emph{evidence lower bound} or simply $\ELBO{}$:
\begin{equation}
\label{eqn:vi-objective}
\mathrm{argmax}_\theta \Big(\ELBO{\theta}\Big),\quad
\mbox{where}\ \, \ELBO{\theta} \defeq \E{q_{\tht}(\bm{z})}{\log \frac{p(\bm{x}^0,\bm{z})}{q_\tht(\bm{z})}}.
\end{equation}
It is equivalent to the objective of minimizing the KL divergence from $q_{\tht}(\bm{z})$ to the posterior
$p(\bm{z} | \bm{x}^0)$.

Most of recent variational-inference algorithms solve the optimization problem \eqref{eqn:vi-objective} by stochastic gradient ascent. They repeatedly estimate the gradient of $\ELBO{\tht}$ and move $\tht$ towards the direction of this estimate:
\begin{equation*}
%\label{eqn:gradient-update}
\tht \leftarrow \tht + \eta \cdot \reallywidehat{\grad{\tht}{\ELBO{\tht}}}
\end{equation*}
The success of this iterative scheme crucially depends on whether it can estimate the gradient well in terms of computation time and variance. As a result, a large part of research efforts on stochastic variational inference has been devoted to constructing low-variance gradient estimators or reducing the variance of existing estimators.

The reparameterization trick~\cite{KingmaICLR14,RezendeICML14} is the technique of choice for constructing a low-variance gradient estimator for models with differentiable densities. It can be applied in our case if the joint $p(\bm{x},\bm{z})$ is differentiable with respect to the latent variable $\bm{z}$. The trick is a two-step recipe for building a gradient estimator. First, it tells us to find a distribution $q(\bm{\eps})$ on $\R^n$ and a smooth function $f : \R^d \times \R^n \to \R^n$ such that $f_\tht(\bm{\eps})$ for $\bm{\eps} \sim q(\bm{\eps})$ has the distribution $q_\tht$. Next, the reparameterization trick suggests us to use the following estimator:
\begin{equation}
\label{eqn:reparam-gradient}
\reallywidehat{\grad{\tht}{\ELBO{\tht}}}
\;\defeq\;
	\frac{1}{N} \sum_{i=1}^N \grad{\tht}{\log \frac{r(f_\tht(\bm{\eps}^{i}))}{q_\tht(f_\tht(\bm{\eps}^{i}))}},
	\quad
	\mbox{where}\ \,
	r(\bm{z}) \defeq p(\bm{x}^0,\bm{z})\ \,
	\mbox{and}\ \,
	\bm{\eps}^{1},\ldots,\bm{\eps}^{N} \sim q(\bm{\eps}). 
\end{equation}

The reparameterization gradient in \eqref{eqn:reparam-gradient} is unbiased, and has variance significantly lower than the so called score estimator (or REINFORCE)~\cite{WilliamsMLJ1992,PaisleyICML12,WingateBBVI13,RanganathAISTATS14}, which does not exploit differentiability. But so far its use has been limited to differentiable models. We will next explain how to lift this limitation.

%auto-ignore

\section{Reparameterization for Non-differentiable Models}
\label{sec:result}

Our main result is a new unbiased gradient estimator for a class of non-differentiable models, which can use the reparameterization trick despite the non-differentiability. 

Recall the notations from the previous section: $\bm{x} \in \R^m$ and $\bm{z}\in \R^n$ for observed and latent variables, $p(\bm{x},\bm{z})$ for their joint density, $\bm{x}^0$ for an observed value, and $q_\tht(\bm{z})$ for a variational distribution parameterized by $\tht \in \R^d$.

Our result makes two assumptions. First, the variational distribution $q_\tht(\bm{z})$ satisfies the conditions of the reparameterization gradient. Namely, $q_\tht(\bm{z})$ is continuously differentiable with respect to $\tht \in \R^d$, and is the distribution of $f_\tht(\bm{\eps})$ for a smooth function $f : \R^d \times \R^n \to \R^n$ and a random variable $\bm{\eps} \in \R^n$ distributed by $q(\bm{\eps})$. Also, the function $f_\tht$ on $\R^n$ is bijective for every $\tht \in \R^d$. Second, the joint density $r(\bm{z}) = p(\bm{x}^0,\bm{z})$ at $\bm{x} = \bm{x}^0$ has the following form:
\begin{equation}
\label{eqn:nondiff-density}
r(\bm{z}) = \sum_{k = 1}^K \indc{\bm{z} \in R_k} \cdot r_k(\bm{z})
\end{equation}
where $r_k$ is a non-negative continuously-differentiable function $\R^n \to \R$, 
$R_k$ is a (measurable) subset of $\R^n$ with measurable boundary $\partial R_k$ such that $\int_{\partial R_k} d\bm{z} = 0$, and $\{R_k\}_{1 \leq k \leq K}$ is a partition of $\R^n$.
Note that $r(\bm{z})$ is an unnormalized posterior under the observation $\bm{x} = \bm{x}^0$. The assumption indicates that the posterior $r$ may be non-differentiable at some $\bm{z}$'s, but all the non-differentiabilities occur only at the boundaries $\partial R_k$ of regions $R_k$. Also, it ensures that when considered under the usual Lebesgue measure on $\R^n$, these non-differentiable points are negligible (i.e., they are included in a null set of the measure). As we illustrate in our experiments section, models satisfying our assumption naturally arise when one starts to use both discrete and continuous random variables or specifies models using programming constructs, such as if statement, as in probabilistic programming~\cite{GoodmanUAI08,Veture14,WoodAISTATS14,GordonICSE14}.

%\hs{Should convince the reader that this kind of models are interesting. Also think about using a running example throughout the paper. Also, I don't quite know what conditions we should impose on $R_k$. At least, we should be aware of them, even if we decide not to spell them out explicitly.}

Our estimator is derived from the following theorem:
\begin{theorem}\label{thm:grad-eq-main}
Let
\[
h_k(\bm{\eps},\tht) \defeq
\log\frac{r_k(f_\tht(\bm{\eps}))}{q_\tht(f_\tht(\bm{\eps}))},
\qquad
\bm{V}(\bm{\eps},\tht) \in \R^{d \times n},
\qquad
\bm{V}(\bm{\eps},\tht)_{ij} \defeq \Bigg(\pdv{\tht_i} \Big(f^{-1}_\tht(\bm{z})\Big)\eval{\bm{z}=f_{\tht}(\bm{\eps})}\Bigg)_j.
\]
Then,
\[
\grad{\tht}{\ELBO{\tht}}
=
\underbrace{\E{q(\bm{\eps})}{\sum_{k=1}^K \indc{f_\tht(\eps) \,{\in}\, R_k} \cdot \grad{\tht}{h_k(\bm{\eps},\tht)}}}_{\ReparamGrad_\tht}
+
\underbrace{\sum_{k=1}^K \int_{f_\tht^{-1}(\partial R_{k})} \big(q(\bm{\eps}) h_k(\bm{\eps},\tht) \bm{V}(\bm{\eps},\tht)\big) \bullet d\bm{\Sigma}}_{\RegionChange_\tht}
\]
where the RHS of the plus uses the surface integral of $q(\bm{\eps}) h_k(\bm{\eps},\tht) \bm{V}(\bm{\eps},\tht)$
over the boundary $f_\tht^{-1}(\partial R_k)$ expressed in terms of $\bm{\eps}$,
the $d\bm{\Sigma}$ is the normal vector of this boundary
that is outward pointing with respect to $f_\tht^{-1}(R_k)$,
and the $\bullet$ operation denotes the matrix-vector multiplication.
\end{theorem}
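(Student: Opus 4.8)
The plan is to push the entire computation into $\bm{\eps}$-space and reduce the gradient to a problem of differentiating integrals whose \emph{domains} move with $\tht$. First I would substitute the reparameterization $\bm{z} = f_\tht(\bm{\eps})$ into the objective and expand $r$ via its piecewise form \eqref{eqn:nondiff-density} to obtain
\[
\ELBO{\tht} = \E{q(\bm{\eps})}{\sum_{k=1}^K \indc{f_\tht(\bm{\eps}) \in R_k}\, h_k(\bm{\eps},\tht)} = \sum_{k=1}^K \int_{S_k(\tht)} q(\bm{\eps})\, h_k(\bm{\eps},\tht)\, d\bm{\eps},
\]
where $S_k(\tht) \defeq f_\tht^{-1}(R_k)$. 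Because $f_\tht$ is a bijection (and, since its inverse is differentiated in the statement, a diffeomorphism), the sets $\{S_k(\tht)\}_k$ partition $\R^n$ with boundaries $\partial S_k(\tht) = f_\tht^{-1}(\partial R_k)$; crucially, each domain $S_k(\tht)$ now moves as $\tht$ varies, while the integrand $q(\bm{\eps})h_k(\bm{\eps},\tht)$ is continuously differentiable in $\tht$ on each region's interior.

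The central step is to differentiate each $\int_{S_k(\tht)} q(\bm{\eps}) h_k(\bm{\eps},\tht)\, d\bm{\eps}$ with respect to a coordinate $\tht_i$ using the generalized Leibniz rule for a moving domain (Flanders' theorem, cited above; it may itself be derived from the divergence theorem). This rule splits the derivative into a \emph{bulk} term, in which the integrand is differentiated with the domain held fixed, and a \emph{boundary} term accounting for the motion of $\partial S_k(\tht)$:
\[
\pdv{\tht_i}\int_{S_k(\tht)} q(\bm{\eps}) h_k\, d\bm{\eps}
= \int_{S_k(\tht)} q(\bm{\eps})\, \pdv{\tht_i} h_k\, d\bm{\eps}
+ \int_{\partial S_k(\tht)} q(\bm{\eps}) h_k\, (\bm{v}_i \cdot \bm{n})\, d\Sigma,
\]
where $\bm{n}$ is the outward unit normal of $S_k(\tht)$ and $\bm{v}_i$ is the velocity at which boundary points move as $\tht_i$ changes. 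Summing the bulk terms over $k$ and rewriting them as an expectation over $q(\bm{\eps})$ gives exactly the $i$-th component of $\ReparamGrad_\tht$; it is precisely the transport theorem that licenses treating the region indicators as frozen here, so that only $h_k$ and not the domain is differentiated.

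It remains to identify the boundary terms with $\RegionChange_\tht$. The required velocity field comes from tracking a boundary point at constant $\bm{z}$: if $\bm{\eps}(\tht) = f_\tht^{-1}(\bm{z})$ with $\bm{z} \in \partial R_k$ fixed, then $f_\tht(\bm{\eps}(\tht)) \in \partial R_k$ for all $\tht$, so $\bm{\eps}(\tht) \in \partial S_k(\tht)$, and its velocity is $\bm{v}_i = \pdv{\tht_i} f_\tht^{-1}(\bm{z})\eval{\bm{z} = f_\tht(\bm{\eps})}$, which is exactly the $i$-th row of $\bm{V}(\bm{\eps},\tht)$. Writing $d\bm{\Sigma} = \bm{n}\, d\Sigma$ for the oriented surface element, the scalar $(\bm{v}_i \cdot \bm{n})\, d\Sigma$ is the $i$-th entry of $\bm{V} \bullet d\bm{\Sigma}$, so that summing over $i$ and $k$ reproduces $\RegionChange_\tht$ with the stated outward orientation relative to $f_\tht^{-1}(R_k)$.

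I expect the main obstacle to be the rigorous verification of the transport theorem's hypotheses rather than the algebra above. Concretely, one must dominate the integrand well enough to exchange $\pdv{\tht_i}$ with the bulk integral, show that each moving boundary $f_\tht^{-1}(\partial R_k)$ is a piecewise-$C^1$ hypersurface of finite surface measure (which should follow from $\int_{\partial R_k} d\bm{z} = 0$ together with $f_\tht$ being a diffeomorphism), and control $h_k$—in particular $\log r_k$—near the region boundaries where $r_k$ may vanish. It is also worth stressing that the boundary contributions do not cancel pairwise: a hypersurface shared by $R_k$ and $R_{k'}$ is counted with opposite normals but with the \emph{distinct} integrands $h_k$ and $h_{k'}$, so $\RegionChange_\tht$ is genuinely nonzero and cannot be dropped.
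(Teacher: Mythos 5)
Your proposal is correct and follows essentially the same route as the paper: reparameterize, write $\ELBO{\tht}$ as a sum of integrals over the moving domains $f_\tht^{-1}(R_k)$, apply Flanders' moving-domain differentiation theorem with velocity field $\pdv{\tht_i}f_\tht^{-1}(\bm{z})$ evaluated at $\bm{z}=f_\tht(\bm{\eps})$, and identify the boundary contribution with $\RegionChange_\tht$. The only presentational difference is that you invoke the transport theorem directly in surface-integral form, whereas the paper first obtains the boundary contribution as a volume integral of the divergence $\grad{\bm{\eps}}\bullet\big(q\,h_k\,\bm{V}\big)$ and then converts it via the divergence theorem, dealing with the non-compactness of $f_\tht^{-1}(R_k)$ by approximating $q$ with compactly supported test functions in the Schwartz space.
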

The theorem says that the gradient of $\ELBO{\tht}$ comes from two sources. The first is the usual reparameterized
gradient of each $h_k$ but restricted to its region $R_k$. The second source is the sum of the surface integrals over the region boundaries $\partial R_k$. Intuitively, the surface integral for $k$ computes the direction to move $\tht$ in order to increase the contribution of the boundary $\partial R_k$ to $\ELBO{\tht}$.
%, in particular, the part of integrating $q(\bm{\eps})h_k(\bm{\eps},\tht)$. 
Note that the integrand of the surface integral has the additional $\bm{V}$ term.
This term is a by-product of rephrasing the original integration over $\bm{z}$
in terms of the reparameterization variable $\bm{\eps}$.
We write $\ReparamGrad_\tht$ for the contribution from the first source,
and $\RegionChange_\tht$ for that from the second source. The proof of the theorem uses
an existing but less known theorem about interchanging integration and differentiation under moving
domain~\cite{FlandersAMM1973}, together with the divergence theorem. It appears in the supplementary material of this paper.

At this point, some readers may feel uneasy with the $\RegionChange_\tht$ term in our theorem.
They may reason like this. Every boundary $\partial R_k$ is a measure-zero set in $\R^n$, and non-differentiabilities occur only at these $\partial R_k$'s. So, why do we need more than $\ReparamGrad_\tht$, the case-split version of the usual reparameterization? Unfortunately, this heuristic reasoning is incorrect, as indicated by the following proposition:
\begin{proposition}
There are models satisfying this section's conditions s.t. $\grad{\tht}{\ELBO{\tht}} \neq \ReparamGrad_\tht$.
\end{proposition}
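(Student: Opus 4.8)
The plan is to prove the proposition by exhibiting a single explicit model that satisfies all of this section's conditions, and then computing both $\grad{\tht}{\ELBO{\tht}}$ and $\ReparamGrad_\tht$ in closed form to witness their disagreement. The guiding design principle is to rig the model so that the reparameterized integrand $h_k$ becomes entirely independent of $\tht$, forcing $\ReparamGrad_\tht = 0$, while the true gradient stays nonzero because shifting $\tht$ redistributes probability mass across a discontinuity of the density. This simultaneously proves the claim and dramatizes the point of the paper: the whole gradient can come from a measure-zero boundary.

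Concretely, I would take $n = d = 1$, the variational family $q_\tht(\bm{z}) = \mathcal{N}(z;\tht,1)$ with location reparameterization $f_\tht(\eps) = \eps + \tht$ and $\eps \sim q(\eps) = \mathcal{N}(0,1)$, and the piecewise-constant unnormalized posterior $r(z) = a\,\indc{z < 0} + b\,\indc{z \geq 0}$ for distinct constants $a,b > 0$. This matches the form \eqref{eqn:nondiff-density} with $R_1 = (-\infty,0)$, $R_2 = [0,\infty)$, $r_1 \equiv a$, $r_2 \equiv b$: each $r_k$ is non-negative and continuously differentiable, $\{R_1,R_2\}$ partitions $\R$, and the boundaries $\{0\}$ are Lebesgue-null. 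The variational hypotheses hold since $f$ is smooth and $f_\tht$ is a bijection. Because $\log r$ is bounded and $\E{q_\tht}{-\log q_\tht(z)}$ is the finite, $\tht$-independent Gaussian entropy, $\ELBO{\tht}$ is well-defined and differentiable.

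For the true gradient I would integrate before differentiating, sidestepping any interchange concern: writing $\Phi$ and $\phi$ for the standard normal CDF and density, $\E{q_\tht}{\log r(z)} = (\log a)\,\Phi(-\tht) + (\log b)\,\Phi(\tht)$, and since the entropy term contributes no $\tht$-dependence, $\grad{\tht}{\ELBO{\tht}} = \phi(\tht)\,\log(b/a)$, which is nonzero whenever $a \neq b$. For the reparameterized term, the location reparameterization gives $q_\tht(f_\tht(\eps)) = \phi(\eps)$ and each $r_k$ is constant, so $h_k(\eps,\tht) = \log r_k - \log\phi(\eps)$ carries no $\tht$-dependence and $\grad{\tht}{h_k} \equiv 0$; hence $\ReparamGrad_\tht = 0$. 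Comparing the two yields $\grad{\tht}{\ELBO{\tht}} \neq \ReparamGrad_\tht$, as claimed (and, consistently with Theorem~\ref{thm:grad-eq-main}, the entire gradient is supplied by $\RegionChange_\tht$).

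I do not expect a deep obstacle here; the main care-points are verifying that the toy model genuinely meets every stated hypothesis, in particular the partition and measure-zero-boundary conditions and the finiteness of $\ELBO{\tht}$ despite $r$ not being normalizable, and making the evaluation of $\grad{\tht}{\ELBO{\tht}}$ rigorous by integrating first rather than differentiating under the integral sign. If a normalizable $r$ is preferred, I would instead multiply both $r_1$ and $r_2$ by a common smooth positive factor such as $e^{-z^2/2}$; this keeps each $r_k$ continuously differentiable and non-negative and preserves the discontinuity that drives $\RegionChange_\tht$, at the cost of slightly heavier but still routine integrals.
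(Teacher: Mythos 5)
Your proposal is correct and takes essentially the same route as the paper: a one-dimensional Gaussian location family $q_\tht(z)=\Dnormal(z|\tht,1)$ with $f_\tht(\eps)=\eps+\tht$, an unnormalized posterior that jumps across $z=0$, and an explicit computation showing the true gradient picks up a boundary term $\phi(\tht)\log(b/a)$ that $\ReparamGrad_\tht$ misses. Your normalized fallback (multiplying both pieces by $e^{-z^2/2}$) is in fact exactly the paper's example, where $r(z)=\indc{z>0}c_1\Dnormal(z|0,1)+\indc{z\leq 0}c_2\Dnormal(z|0,1)$ and $\grad{\tht}\ELBO{\tht}-\ReparamGrad_\tht=\log(c_1/c_2)\exp(-\tht^2/2)/\sqrt{2\pi}$; you were right to flag the non-normalizability of your primary choice of $r$, since the section frames $r$ as $p(\bm{x}^0,\bm{z})$ for a genuine joint model with a proper posterior, so the normalized variant is the one to present.
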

\begin{proof}
  Consider the model $p(x,z) = \Dnormal(z|0,1) \big(\indc{z > 0}\Dnormal(x|5,1) + \indc{z \leq 0}\Dnormal(x|{-}2,1)\big)$ for $x \in \R$ and $z \in \R$,
  the variational distribution $q_\tht(z) = \Dnormal(z|\tht,1)$ for $\tht \in \R$,
  and its reparameterization $f_\tht(\eps) = \eps + \tht$
  and $q(\eps) = \Dnormal(\eps|0,1)$ for $\eps \in \R$.
  For an observed value $x^0 = 0$, the joint density $p(x^0,z)$ becomes
  $r(z) = \indc{z > 0}\cdot c_1 \Dnormal(z|0,1) + \indc{z \leq 0} \cdot c_2 \Dnormal(z|0,1)$,
  where $c_1 = \Dnormal(0|5,1)$ and $c_2 = \Dnormal(0|{-}2,1)$.
  Notice that $r$ is non-differentiable only at $z=0$
  and $\{0\}$ is a null set in $\R$.

  For any $\tht$,
  $\grad{\tht}\ELBO{\tht}$ is computed as follows:
  Since
  $\log({r(z)}/{q_\tht(z)}) =
  \indc{z>    0} \cdot ( \tht^2/2 - z\tht + \log c_1 ) +
  \indc{z\leq 0} \cdot ( \tht^2/2 - z\tht + \log c_2 )$,
  we have\footnote{The error function $\erf$ is defined by
  $\erf(x) = 2 \int_0^x \exp(-t^2)\,dt / \sqrt{\pi}$.}
  $\ELBO{\tht}
  = \frac{1}{2}[-\tht^2
  + \erf({\tht}/{\sqrt{2}})\log({c_1/c_2})
  + \log(c_1c_2)]$
  and thus obtain
  $\grad{\tht} \ELBO{\tht}
  = -\tht + \log({c_1}/{c_2}) \exp(-{\tht^2}/{2}) /\sqrt{2\pi}$.

  On the other hand, $\ReparamGrad_\tht$ is computed as follows:
  After reparameterizing $z$ into $\eps$, we have
  $\log\big({r(f_\tht(\eps))} / {q_\tht(f_\tht(\eps))}\big) =
  \indc{\eps+\tht>    0} \cdot ( -\tht^2/2 - \eps\tht + \log c_1 ) +
  \indc{\eps+\tht\leq 0} \cdot ( -\tht^2/2 - \eps\tht + \log c_2 )$,
  so the term inside the expectation of $\ReparamGrad_\tht$ is
  $ \indc{\eps+\tht>    0} \cdot ( -\tht - \eps ) +
  \indc{\eps+\tht\leq 0} \cdot ( -\tht - \eps )$
  and we obtain $\ReparamGrad_\tht = -\tht$.

  Note that $\grad{\tht}\ELBO{\tht} \neq \ReparamGrad_\tht$ for any $\tht$.
  The difference between the two quantities is $\RegionChange_\tht$ in Theorem~\ref{thm:grad-eq-main}.
  The main culprit here is that
  interchanging differentiation and integration is sometimes invalid:
  for $D_1, D_2(\tht) \subset \R^n$ and $\alpha_1, \alpha_2 : \R^n \times \R^d \to \R$,
  the below equations {\em do not} hold in general
  if $\alpha_1$ is not differentiable in $\tht$,
  and if $D_2(\cdot)$ is not constant (even when $\alpha_2$ is differentiable in $\tht$).
  \[ 
  \grad{\tht}\int_{D_1} \alpha_1(\bm{\eps},\tht)d\bm{\eps} =
  \int_{D_1}\grad{\tht} \alpha_1(\bm{\eps},\tht)d\bm{\eps}
  \quad\tand\quad
  \grad{\tht}\int_{D_2(\tht)} \alpha_2(\bm{\eps},\tht)d\bm{\eps} =
  \int_{D_2(\tht)}\grad{\tht} \alpha_2(\bm{\eps},\tht)d\bm{\eps}.
  \]
\end{proof}

The $\ReparamGrad_\tht$ term in Theorem~\ref{thm:grad-eq-main} can be easily estimated by the standard
Monte Carlo:
\begin{equation*}
%\label{eqn:reparam-estimator}
\ReparamGrad_\tht \;\approx\;
\frac{1}{N} \sum_{i=1}^N\Bigg(\sum_{k=1}^K \indc{f_\tht(\bm{\eps}^i) \,{\in}\, R_k} \cdot \grad{\tht}{h_k(\bm{\eps}^i,\tht)}\Bigg)
\quad
\mbox{for i.i.d.\ $\bm{\eps}^1,\ldots,\bm{\eps}^N \sim q(\bm{\eps})$}. 
\end{equation*}
We write $\reallywidehat{\ReparamGrad_\tht}$ for this estimate.

However, estimating the other $\RegionChange_\tht$ term is not that easy, because of the difficulties in estimating surface integrals in the term. In general, to approximate a surface integral well, we need a parameterization of the surface, and a scheme for generating samples from it~\cite{DiaconisMainfold13}; this general methodology and a known theorem related to our case are reviewed in the supplementary material. 

In this paper, we focus on a class of models that use relatively simple (reparameterized) boundaries $f^{-1}_\tht(\partial R_k)$ and permit, as a result, an efficient method for estimating surface integrals in $\RegionChange_\tht$. 

A good way to understand our simple-boundary condition is to start with something even simpler, namely the condition that $f^{-1}_\tht(\partial R_k)$ is an $(n{-}1)$-dimensional hyperplane $\{\bm{\eps} \,\mid\, \bm{a} \cdot \bm{\eps} = c\}$.
Here the operation $\cdot$ denotes the dot-product.
A surface integral over such a hyperplane can be estimated using  % estimated easily
the following theorem:
\begin{theorem}\label{thm:est-surface-integral}
Let $q(\bm{\eps}) = \prod_{i=1}^n q(\bm{\eps}_i)$ and $S$ a measurable subset of $\R^n$.
Assume that $S = \{\bm{\eps} \,\mid\, \bm{a} \cdot \bm{\eps} > c\}$ or 
$S = \{\bm{\eps} \,\mid\, \bm{a} \cdot \bm{\eps} \geq c\}$ for some $\bm{a} \in \R^n$ and $c \in \R$,
and that $\bm{a}_j \neq 0$ for some $j$. Then,
\[
\int_{\partial S} \big(q(\bm{\eps}) \bm{F}(\bm{\eps})\big) \bullet d\bm{\Sigma}
\;= \;
\E{q(\bm{\zeta})}{\bm{G}(g(\bm{\zeta})) \bullet \bm{n}}
\quad\mbox{for all measurable $\bm{F} : \R^n \to \R^{d\times n}$}. 
%% \sqrt{\sum_{i\neq j} \Big(\frac{\bm{a}_i}{\bm{a}_j}\Big)^2}
%% \cdot \prod_{i\neq j} \bm{a}_i
%% \cdot \E{q(\bm{\zeta})}{\bm{W}(g(\bm{\zeta})) \bullet \bm{n}}
\]
Here $d\bm{\Sigma}$ is the normal vector pointing outward with respect to
$S$,
%% $\{\bm{\eps} \,\mid\, \bm{a} \cdot \bm{\eps} > c\}$,
$\bm{\zeta}$ ranges over $\R^{n-1}$, its density $q(\bm{\zeta})$ is the product
of the densities for its components, and this component density $q(\bm{\zeta}_i)$
is the same as the density $q(\bm{\eps}_{i'})$ for the $i'$-th component of $\bm{\eps}$,
where $i'=i + \indc{i \geq j}$.
Also,
\begin{align*}
\bm{G}(\bm{\eps}) & \defeq
q(\bm{\eps}_j) \bm{F}(\bm{\eps}),
&
g(\bm{\zeta}) & \defeq
\Big(\bm{\zeta}_1,\ldots,\bm{\zeta}_{j-1},
\frac{1}{\bm{a}_j}(c-\bm{a}_{-j}\cdot\bm{\zeta}),
%% \Big(\sum_{i=1}^{j-1} \frac{\bm{a}_i\bm{\zeta}_i}{\bm{a}_j}+ \sum_{i=j}^{n-1} \frac{\bm{a}_{i+1}\bm{\zeta}_i}{\bm{a}_j}  - \frac{c}{\bm{a}_j}\Big),
\bm{\zeta}_{j},\ldots,\bm{\zeta}_{n-1}\Big)^\intercal,
\\
\bm{a}_{-j} & \defeq (\bm{a}_1, \ldots, \bm{a}_{j-1}, \bm{a}_{j+1}, \ldots, \bm{a}_n),
&
\bm{n} & \defeq
\sgn{-\bm{a}_j} \Big(\frac{\bm{a}_1}{\bm{a}_j},\ldots,\frac{\bm{a}_{j-1}}{\bm{a}_j},1,\frac{\bm{a}_{j+1}}{\bm{a}_j},\ldots,\frac{\bm{a}_n}{\bm{a}_j}\Big)^\intercal.
%% \Big(-\frac{\bm{a}_i}{\bm{a}_j} \Big)_{1 \leq i \leq n}^\intercal,
%% \Big(\frac{\bm{a}_i}{\bm{a}_j},\ldots,\frac{\bm{a}_{j-1}}{\bm{a}_j},-1,\frac{\bm{a}_{j+1}}{\bm{a}_j},\ldots,\frac{\bm{a}_n}{\bm{a}_j}\Big)^\intercal
%% \Bigg/ \sqrt{\sum_{i\neq j} \Big(\frac{\bm{a}_1}{\bm{a}_j}\Big)^2},
\end{align*}
\end{theorem}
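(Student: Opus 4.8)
The plan is to reduce the surface (flux) integral over the hyperplane $\partial S$ to an ordinary Lebesgue integral over $\R^{n-1}$ by using an explicit graph parameterization, and then to exploit the product structure of $q$ to rewrite this integral as an expectation. Since $\bm{a}_j \neq 0$, the defining equation $\bm{a}\cdot\bm{\eps} = c$ of $\partial S$ can be solved for the $j$-th coordinate, $\bm{\eps}_j = \frac{1}{\bm{a}_j}(c - \bm{a}_{-j}\cdot\bm{\zeta})$, which is exactly the map $g$: it sends the remaining $n-1$ coordinates $\bm{\zeta}$ to the point of $\partial S$ whose $j$-th coordinate has been filled in. Because $\bm{F}$ is matrix-valued, I would carry out the argument one row at a time (the operation $\bullet$ is linear in $\bm{F}$), so it suffices to treat a single vector field.

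The first key step is to identify the oriented surface element $d\bm{\Sigma}$ under this parameterization. The hyperplane has unit normal $\pm\bm{a}/\|\bm{a}\|$, and the one pointing outward with respect to $S=\{\bm{\eps}\mid\bm{a}\cdot\bm{\eps}>c\}$ is $-\bm{a}/\|\bm{a}\|$; note that the boundary, and hence the integral, is the same whether $S$ uses $>$ or $\geq$. The graph $\bm{\eps}_j = \frac{1}{\bm{a}_j}(c-\bm{a}_{-j}\cdot\bm{\zeta})$ has partial derivatives $-\bm{a}_l/\bm{a}_j$, so its area element is $\sqrt{1+\sum_{l\neq j}(\bm{a}_l/\bm{a}_j)^2}\,d\bm{\zeta} = (\|\bm{a}\|/|\bm{a}_j|)\,d\bm{\zeta}$. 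Multiplying the outward unit normal by this area factor gives $d\bm{\Sigma} = -(\bm{a}/|\bm{a}_j|)\,d\bm{\zeta}$, and a short sign check shows $-\bm{a}/|\bm{a}_j| = \sgn{-\bm{a}_j}\,\bm{a}/\bm{a}_j$, which is precisely the vector $\bm{n}$ in the statement. Hence the surface integral equals $\int_{\R^{n-1}} q(g(\bm{\zeta}))\,\bm{F}(g(\bm{\zeta}))\bullet\bm{n}\,d\bm{\zeta}$.

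The second key step uses the assumption $q(\bm{\eps}) = \prod_i q(\bm{\eps}_i)$. Evaluated along the parameterization, this factorizes as $q(g(\bm{\zeta})) = q\big((g(\bm{\zeta}))_j\big)\cdot\prod_{i\neq j} q(\bm{\eps}_i)$; the second factor is exactly the product density $q(\bm{\zeta})$ of the $n-1$ free coordinates, with the re-indexing $i' = i + \indc{i\geq j}$ accounting for the skipped coordinate $j$. Absorbing the remaining factor $q\big((g(\bm{\zeta}))_j\big)$ into $\bm{F}$ yields the definition $\bm{G}(\bm{\eps}) = q(\bm{\eps}_j)\bm{F}(\bm{\eps})$, so that $q(g(\bm{\zeta}))\,\bm{F}(g(\bm{\zeta})) = q(\bm{\zeta})\,\bm{G}(g(\bm{\zeta}))$. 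The integral then becomes $\int_{\R^{n-1}} q(\bm{\zeta})\,\bm{G}(g(\bm{\zeta}))\bullet\bm{n}\,d\bm{\zeta}$, which is the expectation $\E{q(\bm{\zeta})}{\bm{G}(g(\bm{\zeta}))\bullet\bm{n}}$ on the right-hand side.

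I expect the main obstacle to lie in the first step: pinning down the oriented area element so that the un-normalized normal comes out exactly as $\bm{n}$, keeping both its magnitude ($\|\bm{a}\|/|\bm{a}_j|$, the Jacobian of the graph map) and its outward orientation correct for every sign of $\bm{a}_j$. The factorization and change-of-variables bookkeeping in the remaining steps are routine once the orientation conventions are fixed; some care with measurability is needed to ensure that both sides are well-defined for an arbitrary measurable $\bm{F}$, but no integrability hypothesis beyond that is required, since the identity is ultimately a pointwise change of variables.
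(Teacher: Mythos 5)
Your proof is correct and follows essentially the same route as the paper: you parameterize $\partial S$ by the same graph map $g$, obtain the same oriented surface element $\bm{n}\,d\bm{\zeta}$ with Jacobian factor $\|\bm{a}\|_2/|\bm{a}_j|$ and outward unit normal $-\bm{a}/\|\bm{a}\|_2$, and then factor $q$ to absorb the $j$-th component into $\bm{G}$. The only cosmetic difference is that the paper cites the general area formula from the manifold-sampling literature, whereas you derive that change of variables by hand for the graph parameterization.
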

The theorem says that if the boundary $\partial S$ is an $(n{-}1)$-dimensional hyperplane
$\{\bm{\eps} \,\mid\, \bm{a} \cdot \bm{\eps} = c\}$,
we can parameterize the surface by a linear map $g : \R^{n-1} \to \R^n$ and express
the surface integral as an expectation over $q(\bm{\zeta})$. This distribution
for $\bm{\zeta}$ is the marginalization of $q(\bm{\eps})$ over the $j$-th component. Inside the
expectation, we have the product of the matrix $\bm{G}$ and the vector
$\bm{n}$. The matrix comes from the integrand of the surface integral, and
the vector is the direction of the surface. Note that $\bm{G}(\bm{\eps})$ has $q(\bm{\eps}_j)$ instead
of $q(\bm{\eps})$; the missing part of $q(\bm{\eps})$ has been converted to the distribution $q(\bm{\zeta})$.

When every  $f^{-1}_\tht(\partial R_k)$ is an $(n{-}1)$-dimensional hyperplane $\{\bm{\eps} \,\mid\, \bm{a} \cdot \bm{\eps} = c\}$ for $\bm{a} \in \R^n$ and $c \in \R$ with $\bm{a}_{j_k} \neq 0$, we can use Theorem~\ref{thm:est-surface-integral} and estimate the surface integrals in $\RegionChange_\tht$ as follows:
\[
\int_{f_\tht^{-1}(\partial R_{k})} \big(q(\bm{\eps}) h_k(\bm{\eps},\tht) \bm{V}(\bm{\eps},\tht)\big) \bullet d\bm{\Sigma}
\;\approx\;
%% \sqrt{\sum_{i\neq j} \Big(\frac{\bm{a}_i}{\bm{a}_j}\Big)^2}
%% \cdot \prod_{i\neq j} \bm{a}_i
%% \cdot
\frac{1}{M}\sum_{i=1}^M \bm{W}(g(\bm{\zeta}^i)) \bullet \bm{n}
\quad
\mbox{for i.i.d.\ $\bm{\zeta}^1, \ldots, \bm{\zeta}^ M \sim q(\bm{\zeta})$},
\]
where $\bm{W}(\bm{\eps}) = q(\bm{\eps}_{j_k})h_k(\bm{\eps},\tht)\bm{V}(\bm{\eps},\tht)$.
Let $\reallywidehat{\RegionChange_{(\tht,k)}}$ be this estimate. Then, our estimator
for the gradient of $\ELBO{\tht}$ in this case computes:
\[
\reallywidehat{\grad{\tht}\ELBO{\tht}} \;\defeq\;
\reallywidehat{\ReparamGrad_\tht}
+
\sum_{k=1}^K \reallywidehat{\RegionChange_{(\tht,k)}}.
\]
The estimator is unbiased because of Theorems~\ref{thm:grad-eq-main} and \ref{thm:est-surface-integral}:
\begin{corollary}
$\E{}{\reallywidehat{\grad{\tht}\ELBO{\tht}}} = \grad{\tht}\ELBO{\tht}$.
\end{corollary}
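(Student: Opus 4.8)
The plan is to prove unbiasedness term by term, leaning on linearity of expectation to reduce the claim to two separate Monte Carlo unbiasedness statements, one for each source of the gradient identified in Theorem~\ref{thm:grad-eq-main}. Concretely, since $\reallywidehat{\grad{\tht}\ELBO{\tht}} = \reallywidehat{\ReparamGrad_\tht} + \sum_{k=1}^K \reallywidehat{\RegionChange_{(\tht,k)}}$ and expectation is linear, it suffices to establish (a) $\E{}{\reallywidehat{\ReparamGrad_\tht}} = \ReparamGrad_\tht$ and (b) $\E{}{\reallywidehat{\RegionChange_{(\tht,k)}}}$ equals the $k$-th surface integral in $\RegionChange_\tht$, for each $k$. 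Granting these, summing over $k$ gives $\E{}{\reallywidehat{\grad{\tht}\ELBO{\tht}}} = \ReparamGrad_\tht + \RegionChange_\tht$, which equals $\grad{\tht}\ELBO{\tht}$ by Theorem~\ref{thm:grad-eq-main}.

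For (a), I would observe that $\reallywidehat{\ReparamGrad_\tht}$ is the empirical average over $N$ i.i.d.\ draws $\bm{\eps}^i \sim q(\bm{\eps})$ of the random variable $\sum_{k=1}^K \indc{f_\tht(\bm{\eps}^i) \in R_k}\cdot \grad{\tht}{h_k(\bm{\eps}^i,\tht)}$. By the very definition of $\ReparamGrad_\tht$ as an expectation over $q(\bm{\eps})$, the expectation of a single such term is exactly $\ReparamGrad_\tht$; hence by linearity the average of $N$ of them also has expectation $\ReparamGrad_\tht$. The only thing to verify here is integrability of the summand, which holds whenever $\ReparamGrad_\tht$ is finite, a consequence of the standing differentiability assumptions on $r_k$, $q_\tht$, and $f$.

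For (b), the work is delegated to Theorem~\ref{thm:est-surface-integral}. Fix $k$ and set $S \defeq f_\tht^{-1}(R_k)$. Under this section's simple-boundary hypothesis, $f_\tht^{-1}(\partial R_k)$ is a hyperplane $\{\bm{\eps} \mid \bm{a}\cdot\bm{\eps} = c\}$ with $\bm{a}_{j_k} \neq 0$, so $S$ is one of the two half-spaces $\{\bm{a}\cdot\bm{\eps} > c\}$ or $\{\bm{a}\cdot\bm{\eps} \geq c\}$ cut out by that hyperplane, and $\partial S = f_\tht^{-1}(\partial R_k)$. I would then apply Theorem~\ref{thm:est-surface-integral} with $\bm{F}(\bm{\eps}) = h_k(\bm{\eps},\tht)\bm{V}(\bm{\eps},\tht)$; the induced matrix $\bm{G}(\bm{\eps}) = q(\bm{\eps}_{j_k})\bm{F}(\bm{\eps})$ is precisely the $\bm{W}$ appearing in $\reallywidehat{\RegionChange_{(\tht,k)}}$. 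The theorem rewrites the surface integral as $\E{q(\bm{\zeta})}{\bm{W}(g(\bm{\zeta}))\bullet\bm{n}}$, and since $\reallywidehat{\RegionChange_{(\tht,k)}}$ is exactly the $M$-sample Monte Carlo average of $\bm{W}(g(\bm{\zeta}))\bullet\bm{n}$ with $\bm{\zeta}\sim q(\bm{\zeta})$, linearity of expectation yields (b).

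The genuinely delicate point is checking that the hypotheses of Theorem~\ref{thm:est-surface-integral} actually hold for each $k$: that $q(\bm{\eps})$ factorizes as $\prod_i q(\bm{\eps}_i)$ (a requirement on the chosen reparameterization), that $f_\tht^{-1}(R_k)$ is genuinely a half-space of the stated form with $\bm{a}_{j_k}\neq 0$, and above all that orientations are consistent, i.e.\ that taking $S = f_\tht^{-1}(R_k)$ makes the outward normal $\bm{n}$ of Theorem~\ref{thm:est-surface-integral} coincide with the $d\bm{\Sigma}$ of Theorem~\ref{thm:grad-eq-main}, which is outward with respect to $f_\tht^{-1}(R_k)$. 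This orientation matching is what forces the choice $S = f_\tht^{-1}(R_k)$ rather than its complement. Everything else in the argument is routine use of linearity of expectation together with the definitions of the two estimators.
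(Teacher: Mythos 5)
Your argument is correct and is exactly the reasoning the paper intends: the corollary is stated as an immediate consequence of Theorems~\ref{thm:grad-eq-main} and \ref{thm:est-surface-integral}, and your term-by-term decomposition via linearity of expectation, with Theorem~\ref{thm:est-surface-integral} applied to $\bm{F}(\bm{\eps}) = h_k(\bm{\eps},\tht)\bm{V}(\bm{\eps},\tht)$ and $S = f_\tht^{-1}(R_k)$, is precisely that consequence spelled out. Your attention to the orientation of the outward normal is a worthwhile detail that the paper leaves implicit.
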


We now relax the condition that each boundary is a hyperplane, and consider a more liberal \emph{simple-boundary condition}, which is often satisfied by non-differentiable models from a first-order loop-free probabilistic programming language. This new condition and the estimator under this condition are 
what we have used in our implementation. The relaxed condition is that
%% Consider the case that
the regions $\{f_\tht^{-1}(R_k)\}_{1 \leq k \leq K}$ are obtained
by partitioning $\R^n$ with $L$ $(n{-}1)$-dimensional hyperplanes. That is, there are 
affine maps $\Phi_1,\ldots,\Phi_L : \R^n \to \R$ such that for all $1 \leq k \leq K$,
\[
	f_\tht^{-1}(R_k) = \bigcap_{l=1}^L S_{l, (\sigma_k)_l}
\qquad \mbox{for some $\sigma_k \in \{-1,1\}^L$}
\]
where
$S_{l,1} = \{\bm{\eps} \,\mid\, \Phi_l (\bm{\eps}) > 0 \}$ and
$S_{l,-1} = \{\bm{\eps} \,\mid\, \Phi_l (\bm{\eps}) \leq 0 \}$.
%%\{\bm{\eps} \,\mid\,  \sigma^k_l=1 \Rightarrow \Phi_l (\bm{\eps}) > 0 \}
%% f_\tht^{-1}(R_k) = \{\bm \eps: [\sigma_k]_j \Phi_j(\bm \eps) > 0 \text{ for all }1 \leq j \leq  m\}
Each affine map $\Phi_l$ defines an $(n{-}1)$-dimensional hyperplane $\partial S_{l,1}$,
and $(\sigma_k)_l$ specifies on which side the region $f_\tht^{-1}(R_k)$ lies
with respect to the hyperplane $\partial S_{l,1}$.
Every probabilistic model written in a first-order probabilistic programming language
satisfies the relaxed condition,
if the model does not contain a loop and uses only a fixed finite number of random
variables and the branch condition of each if statement in the model
is linear in the latent variable $\bm{z}$;
in such a case, $L$ is the number of if statements in the model.

Under the new condition, how can we estimate $\RegionChange_\tht$?
A naive approach %% to estimate $\RegionChange_\tht$ in this case
is to estimate the $k$-th surface integral for each $k$ (in some way) and sum them up.
However, with $L$ hyperplanes, the number $K$ of regions can grow as fast as $\bigO{L^n}$,
implying that the naive approach is slow. Even worse the boundaries 
$f_\tht^{-1}(\partial R_k)$ do not satisfy the condition of Theorem~\ref{thm:est-surface-integral},
and just estimating the surface integral over each $f_\tht^{-1}(\partial R_k)$ 
may be difficult.
%% ref: https://mathoverflow.net/a/38287

A solution is to transform the original formulation of $\RegionChange_\tht$ such
that it can be expressed as the sum of surface integrals over $\partial S_{l,1}$'s.
The transformation is based on the following derivation:
\begin{align}
  \RegionChange_\tht
  &=\sum_{k=1}^K
  \int_{f_\tht^{-1}(\partial R_{k})}
  \big(q(\bm{\eps}) h_k(\bm{\eps},\tht) \bm{V}(\bm{\eps},\tht)\big) \bullet d\bm{\Sigma} \nonumber \\
  &=
  \sum_{l=1}^L \int_{\partial S_{l,1}} %%{\{\bm{\eps} \,\mid\, \Phi_l (\bm{\eps})=0\}}
  \Big( q(\bm{\eps}) \bm{V}(\bm{\eps},\tht)
  \sum_{k=1}^K \indc{ \bm \eps \in \bar{ f_\tht^{-1}(R_k)}} (\sigma_k)_l h_k(\bm{\eps},\tht) 
  \Big) \bullet d\bm{\Sigma} \label{eqn:surface-clever}
\end{align}
where $\bar{T}$ denotes the closure of $T \subset \R^n$,
and $d\bm{\Sigma}$ in \eqref{eqn:surface-clever} is the normal vector pointing
outward with respect to $S_{l,1}$.
Since $\{f_\tht^{-1}(R_k)\}_k$ are obtained
by partitioning $\R^n$ with $\{\partial S_{l,1}\}_l$,
we can rearrange the sum of $K$ surface integrals over complicated boundaries
$f_\tht^{-1}(\partial R_k)$,
into the sum of $L$ surface integrals over the hyperplanes $\partial S_{l,1}$ as above.
Although the expression inside the summation over $k$ in \eqref{eqn:surface-clever} looks complicated,
for almost all $\bm{\eps}$,
the indicator function is nonzero for exactly two $k$'s:
$k_1$ with $(\sigma_{k_1})_l=1$  and $k_{-1}$ with $(\sigma_{k_{-1}})_l=-1$.
So, we can efficiently estimate the $l$-th surface integral in \eqref{eqn:surface-clever}
using Theorem~\ref{thm:est-surface-integral},
and call this estimate $\reallywidehat{\RegionChange_{(\tht,l)}}{}'$.
%% We use Monte Carlo samples linear in $L$ (not $K$) to estimate $\RegionChange_\tht$.
%% \begin{corollary}
%% The gradient estimator derived from \eqref{eqn:reparam-estimator} and \eqref{eqn:surface-clever}
%% is unbiased.
%% \end{corollary}
Then, our estimator for the gradient of $\ELBO{\tht}$ in this more general case computes:
\begin{equation}
  \label{eqn:our-estimator}
  \reallywidehat{\grad{\tht}\ELBO{\tht}}{}' \;\defeq\;
  \reallywidehat{\ReparamGrad_\tht}
  +
  \sum_{l=1}^L \reallywidehat{\RegionChange_{(\tht,l)}}{}'.
\end{equation}
The estimator is unbiased because of
Theorems~\ref{thm:grad-eq-main} and \ref{thm:est-surface-integral} and
Equation~\ref{eqn:surface-clever}:
\begin{corollary}
$\E{}{\reallywidehat{\grad{\tht}\ELBO{\tht}}{}'} = \grad{\tht}\ELBO{\tht}$.
\end{corollary}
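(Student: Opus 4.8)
The plan is to combine the exact decomposition furnished by Theorem~\ref{thm:grad-eq-main}, namely $\grad{\tht}\ELBO{\tht} = \ReparamGrad_\tht + \RegionChange_\tht$, with the observation that each of the two summands defining $\reallywidehat{\grad{\tht}\ELBO{\tht}}{}'$ is an unbiased Monte Carlo estimate of the corresponding summand. By linearity of expectation it then suffices to verify the two unbiasedness claims separately and add them.

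First I would dispatch $\E{}{\reallywidehat{\ReparamGrad_\tht}} = \ReparamGrad_\tht$. Since $\reallywidehat{\ReparamGrad_\tht}$ is the empirical average over i.i.d.\ samples $\bm\eps^1,\dots,\bm\eps^N \sim q(\bm\eps)$ of the quantity $\sum_k \indc{f_\tht(\bm\eps)\in R_k}\grad{\tht}h_k(\bm\eps,\tht)$, and $\ReparamGrad_\tht$ is by definition exactly the $q(\bm\eps)$-expectation of that same quantity, linearity of expectation shows each summand of the average has expectation $\ReparamGrad_\tht$, hence so does the average. Integrability, needed to justify this, follows from the smoothness of $f$, $r_k$, $q_\tht$ together with the boundedness of the indicators.

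Next I would treat the boundary term. The key input is Equation~\eqref{eqn:surface-clever}, which rewrites $\RegionChange_\tht$ as $\sum_{l=1}^L \int_{\partial S_{l,1}}(q(\bm\eps)\bm{F}_l(\bm\eps))\bullet d\bm\Sigma$, where $\bm{F}_l(\bm\eps) = \bm{V}(\bm\eps,\tht)\sum_{k}\indc{\bm\eps\in\bar{f_\tht^{-1}(R_k)}}(\sigma_k)_l h_k(\bm\eps,\tht)$. Each boundary $\partial S_{l,1}$ is the hyperplane $\{\bm\eps\mid\Phi_l(\bm\eps)=0\}$ cut out by the affine map $\Phi_l$, so it matches the hypothesis of Theorem~\ref{thm:est-surface-integral} with $\bm a, c$ read off from $\Phi_l$ and some coordinate $j_l$ with $\bm a_{j_l}\neq 0$. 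Applying Theorem~\ref{thm:est-surface-integral} with $\bm F := \bm F_l$ converts the $l$-th surface integral into the exact expectation $\E{q(\bm\zeta)}{\bm W_l(g(\bm\zeta))\bullet\bm n}$, where $\bm W_l(\bm\eps)=q(\bm\eps_{j_l})\bm F_l(\bm\eps)$ is precisely the matrix $\bm W$ used to define $\reallywidehat{\RegionChange_{(\tht,l)}}{}'$. Since that estimate is the empirical average over i.i.d.\ $\bm\zeta^1,\dots,\bm\zeta^M\sim q(\bm\zeta)$ of $\bm W_l(g(\bm\zeta))\bullet\bm n$, its expectation equals the $l$-th surface integral; summing over $l$ and using linearity yields $\E{}{\sum_l \reallywidehat{\RegionChange_{(\tht,l)}}{}'} = \RegionChange_\tht$, and adding the two results proves the corollary.

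The main obstacle is the clean identification of the integrand $\bm{F}_l$ so that Theorem~\ref{thm:est-surface-integral} applies verbatim, together with the accompanying measurability bookkeeping. Concretely I would check that $\bm F_l$ is measurable (it is, being built from the smooth $\bm V$ and $h_k$ and indicators of measurable sets), that the ``almost all $\bm\eps$'' remark---the inner sum over $k$ collapsing to exactly two surviving terms $k_1$ and $k_{-1}$---holds up to a $q(\bm\zeta)$-null subset of $\partial S_{l,1}$ so that the estimator is well defined, and that the orientation and sign conventions for $\bm n$ (outward with respect to $S_{l,1}$) are consistent between Equation~\eqref{eqn:surface-clever} and Theorem~\ref{thm:est-surface-integral}. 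Once these routine checks are in place, the corollary is immediate.
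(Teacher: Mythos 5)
Your proposal is correct and follows the same route the paper takes: it justifies the corollary by combining Theorem~\ref{thm:grad-eq-main}, the rearrangement in Equation~\eqref{eqn:surface-clever}, and Theorem~\ref{thm:est-surface-integral} applied to each hyperplane $\partial S_{l,1}$, with linearity of expectation handling the Monte Carlo averages. The additional measurability and orientation checks you flag are sensible but routine, exactly as you say.
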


%auto-ignore

\section{Experimental Evaluation}
\label{sec:experiments}

We experimentally compare our gradient estimator ($\ours$) to 
the score estimator ($\score$),
an unbiased gradient estimator that is applicable to non-differentiable models,
and the reparameterization estimator ($\repar$),
a biased gradient estimator that computes only $\reallywidehat{\ReparamGrad_\tht}$
(discussed in Section~\ref{sec:result}). $\repar$
is biased in our experiments because it is applied to non-differentiable models.

We implemented a black-box variational inference engine
that accepts a probabilistic model written in
a simple probabilistic programming language
(which supports basic constructs such as {\tt sample}, {\tt observe}, and {\tt if} statements)
and performs variational inference using one of the three aforementioned gradient estimators.
Our implementation\footnote{
  Code is available at \url{https://github.com/wonyeol/reparam-nondiff}.
}
is written in Python
and uses {\tt autograd}~\cite{Maclaurin-thesis},
an automatic differentiation package for Python,
to automatically compute the gradient term in $\reallywidehat{\ReparamGrad_\tht}$ 
for an arbitrary probabilistic model.

\paragraph{Benchmarks.}
We evaluate our estimator on three models for small sequential data:
%one of which uses data generated synthetically:
\begin{itemize}
\item $\tcl$~\cite{SoudjaniQUEST17} models the random dynamics of a controller that attempts to keep the temperature of a room within specified bounds. The controller's state has a continuous part for the room temperature and a discrete part that records the on or off of an air conditioner. At each time step, the value of this discrete part decides which of two different random state updates is employed, and incurs the non-differentiability of the model's density. We use a synthetically-generated sequence of $21$ noisy measurements of temperatures, and perform posterior inference on the sequence of the controller's states given these noisy measurements.
  This model consists of a 41-dimensional latent variable and 80 if statements.
\item $\sns$~\cite{PilonBook} is a model for the numbers of per-day SNS messages over the period of 74 days (skipping every other day). It allows the SNS-usage pattern to change over the period, and this change causes non-differentiability. Finding the posterior distribution over this change is the goal of the inference problem in this case. We use the data from \cite{PilonBook}.
  This model consists of a 3-dimensional latent variable and 37 if statements.
\item $\infl$~\cite{ShumwayBook} is a model for the US influenza mortality data in 1969. The mortality rate in each month depends on whether the dominant influenza virus is of type $1$ or $2$, and finding this type information from a sequence of observed mortality rates is the goal of the inference. The virus type is the cause of non-differentiability in this example.
  This model consists of a 37-dimensional latent variable and 24 if statements.
\end{itemize}

\paragraph{Experimental setup.}
We optimize the $\ELBO{}$ objective using Adam~\cite{KingmaICLR15}
with two stepsizes: 0.001 and 0.01.
We run Adam for 10000 iterations and at each iteration,
we compute each estimator using $N \in \{1, 8, 16\}$ Monte Carlo samples.
For $\ours$,
we use a single subsample $l$ (drawn uniformly at random from $\{1,\cdots,L\}$)
to estimate the summation in~\eqref{eqn:our-estimator},
and use $N$ Monte Carlo samples to compute 
$\reallywidehat{\RegionChange_{(\tht,l)}}{}'$.
%% we use $N=M$, i.e., we use the same number of Monte Carlo samples
%% to compute $\reallywidehat{\ReparamGrad_{\tht}}$ in \eqref{eqn:reparam-estimator}
%% and to compute $\reallywidehat{\RegionChange_{(\tht,l)}}$ for $1\leq l \leq L$ derived from
%% \eqref{eqn:surface-clever}.
%
While maximizing $\ELBO{}$, we measure two things:
the variance of estimated gradients of $\ELBO{}$, and $\ELBO{}$ itself.
Since each gradient is not scalar, we measure two kinds of variance of the gradient,
as in \cite{MillerReparam2017}:
$\varcmp$, the average variance of each of its components,
and $\varnrm$, the variance of its $l^2$-norm.
To estimate the variances and the $\ELBO{}$ objective,
we use 16 and 1000 Monte Carlo samples, respectively.

%%%%%%%%%%%%%%%%%%%%%%%%%%%%%%%%%%%%%%%%%%%%%%%%%%%%%%%%%%%%%%%
%%% tables & graphs
\newcommand{\FIGSCALE}{.39}

{\def\arraystretch{1.4}
\begin{table}[t]
  \begin{subtable}{\textwidth}
    \center
    \begin{tabular}{cc|ccc}
      \hline
      Estimator & Type of Variance & $\tcl$ & $\sns$ & $\infl$ \\
      \hline
      $\repar$
      & $\varcmp$ & $\bm{4.45 \times 10^{-9}}$ & $2.91 \times 10^{-2}$ & $\bm{4.38 \times 10^{-3}}$ \\
      & $\varnrm$ & $\bm{2.45 \times 10^{-8}}$ & $2.92 \times 10^{-2}$ & $\bm{2.12 \times 10^{-3}}$ \\
      \hline
      $\ours$
      & $\varcmp$ & $1.85 \times 10^{-6}$ & $\bm{2.77 \times 10^{-2}}$ & $4.89 \times 10^{-3}$ \\
      & $\varnrm$ & $7.59 \times 10^{-5}$ & $\bm{2.46 \times 10^{-2}}$ & $2.36 \times 10^{-3}$ \\ \hline
    \end{tabular}
    \caption{$\text{stepsize}=0.001$}
  \end{subtable}

  \begin{subtable}{\textwidth}
    \center
    \begin{tabular}{cc|ccc}
      \hline
      Estimator & Type of Variance & $\tcl$ & $\sns$ & $\infl$ \\
      \hline
      $\repar$
      & $\varcmp$ & $3.88 \times 10^{-11}$ & $\bm{5.03 \times 10^{-4}}$ & $\bm{2.46 \times 10^{-3}}$ \\
      & $\varnrm$ & $\bm{6.11 \times 10^{-11}}$ & $1.02 \times 10^{-3}$ & $\bm{1.26 \times 10^{-3}}$ \\
      \hline
      $\ours$
      & $\varcmp$ & $\bm{1.24 \times 10^{-11}}$ & $5.07 \times 10^{-4}$ & $2.80 \times 10^{-3}$ \\
      & $\varnrm$ & $8.05 \times 10^{-11}$ & $\bm{8.12 \times 10^{-4}}$ & $1.40 \times 10^{-3}$ \\ \hline
    \end{tabular}
    \caption{$\text{stepsize}=0.01$}
  \end{subtable}

  %\vspace{5pt}
  \caption{
    Ratio of $\{\repar,\ours\}$'s average variance to $\score$'s 
    for $N=1$. % $\text{stepsize}=0.01$ and
    The values for $\score$ are all $1$, so omitted.
    The optimization trajectories used to compute the above variances
    are shown in Figure~\ref{fig:elbo}. % (b,d,f).
  }
  \label{tab:variance}
\end{table}
}

\begin{figure}[t]
  \center
  \begin{subfigure}{\FIGSCALE\textwidth}
    \includegraphics[width=\textwidth]{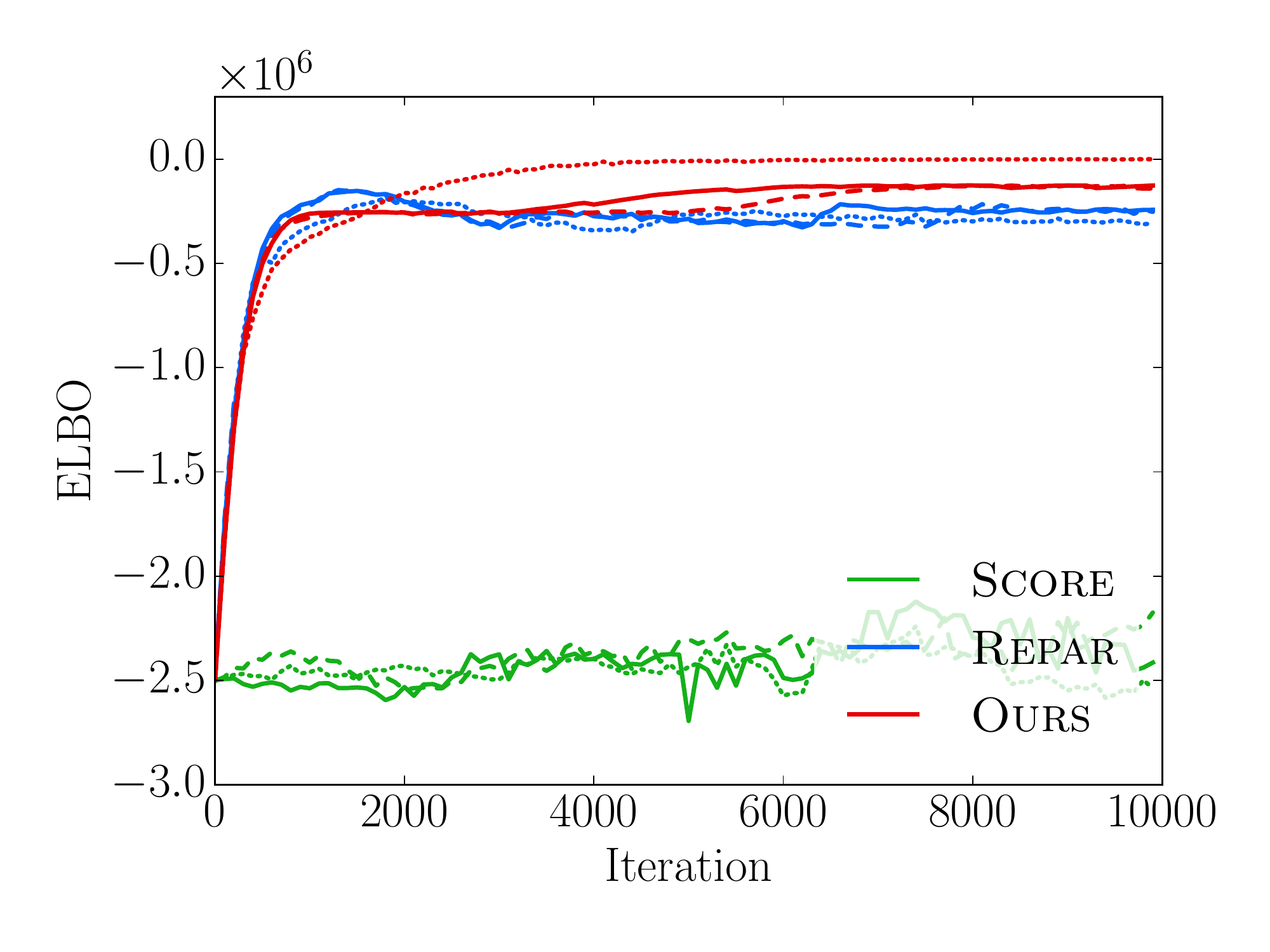}
    \caption{$\tcl$ ($\text{stepsize}=0.001$)}
    \label{fig:tcl-0.001}
  \end{subfigure}
  \begin{subfigure}{\FIGSCALE\textwidth}
    \includegraphics[width=\textwidth]{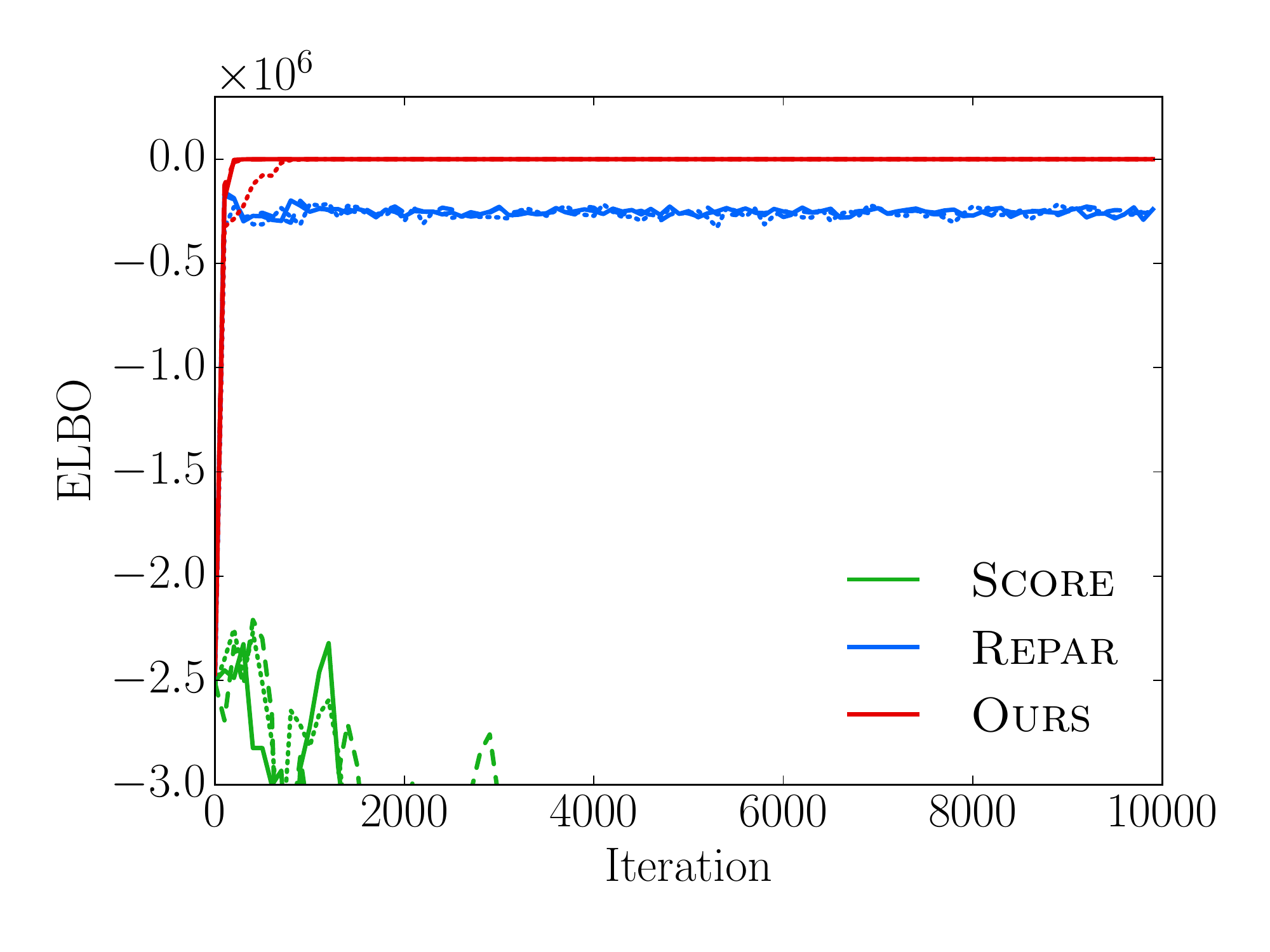}
    \caption{$\tcl$ ($\text{stepsize}=0.01$)}
    \label{fig:tcl-0.01}
  \end{subfigure}
  \\
  \begin{subfigure}{\FIGSCALE\textwidth}
    \includegraphics[width=\textwidth]{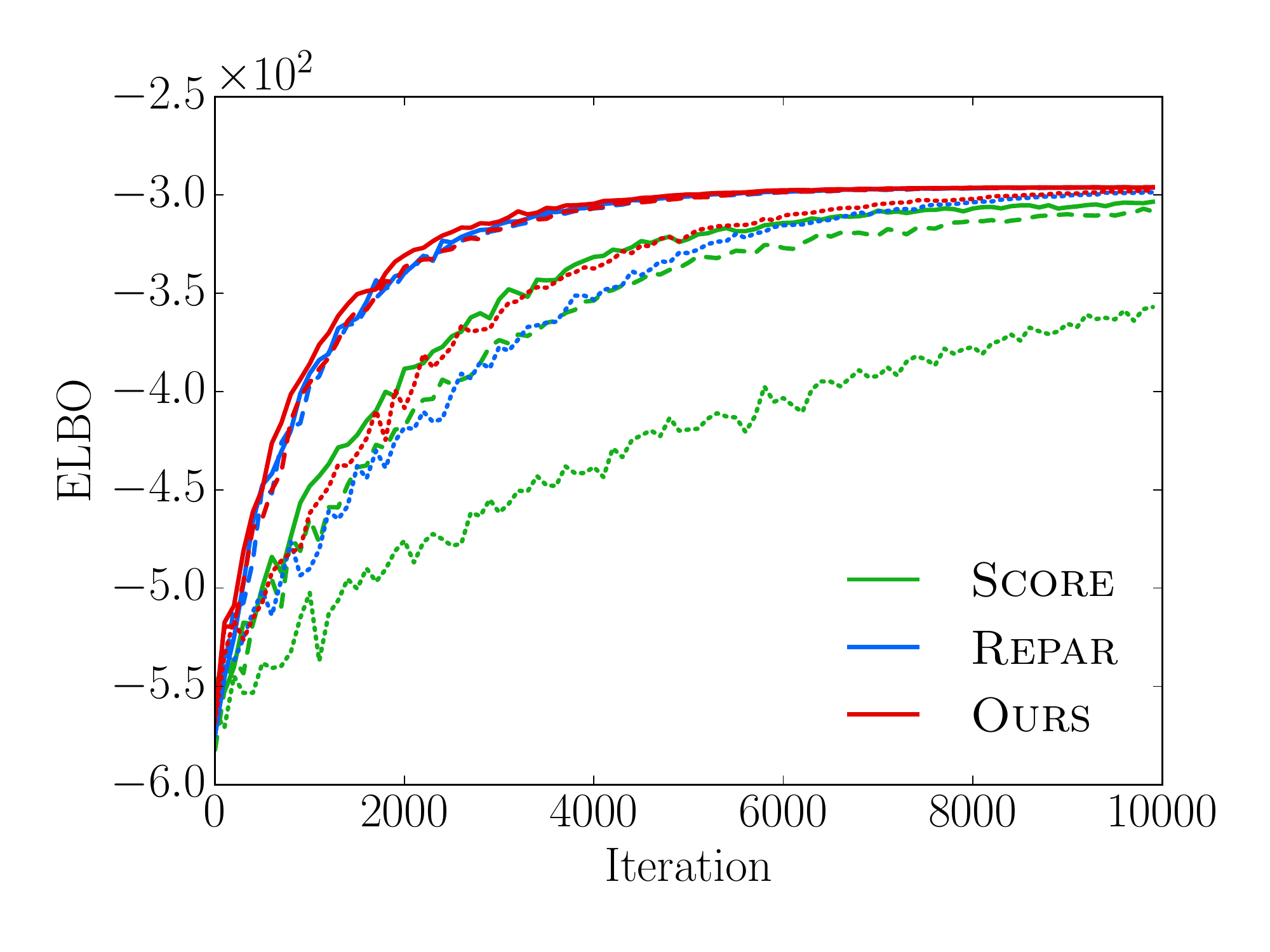}
    \caption{$\sns$ ($\text{stepsize}=0.001$)}
    \label{fig:sns-0.001}
  \end{subfigure}
  \begin{subfigure}{\FIGSCALE\textwidth}
    \includegraphics[width=\textwidth]{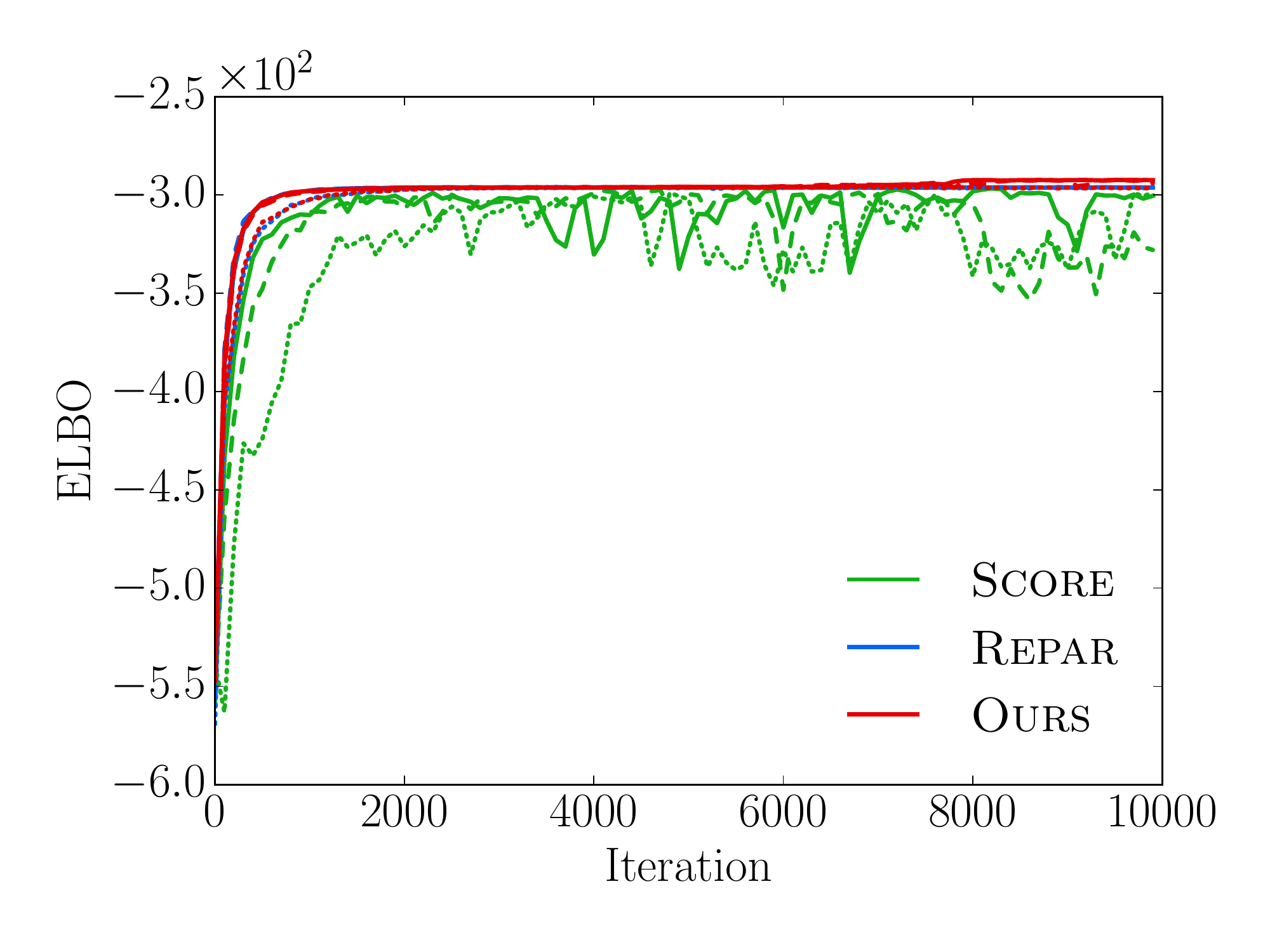}
    \caption{$\sns$ ($\text{stepsize}=0.01$)}
    \label{fig:sns-0.01}
  \end{subfigure}
  \\
  \begin{subfigure}{\FIGSCALE\textwidth}
    \includegraphics[width=\textwidth]{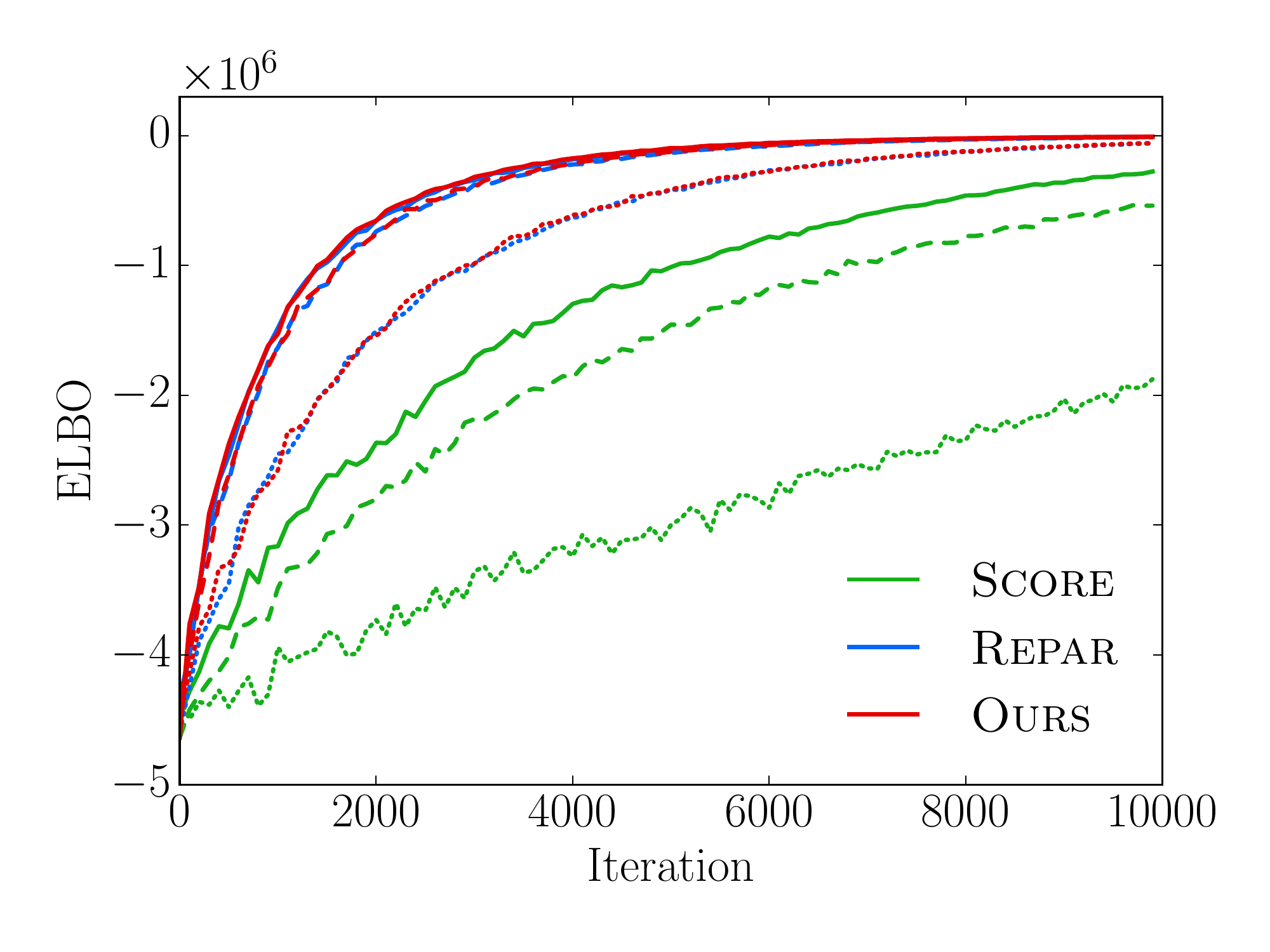}
    \caption{$\infl$ ($\text{stepsize}=0.001$)}
    \label{fig:time-0.001}
  \end{subfigure}
  \begin{subfigure}{\FIGSCALE\textwidth}
    \includegraphics[width=\textwidth]{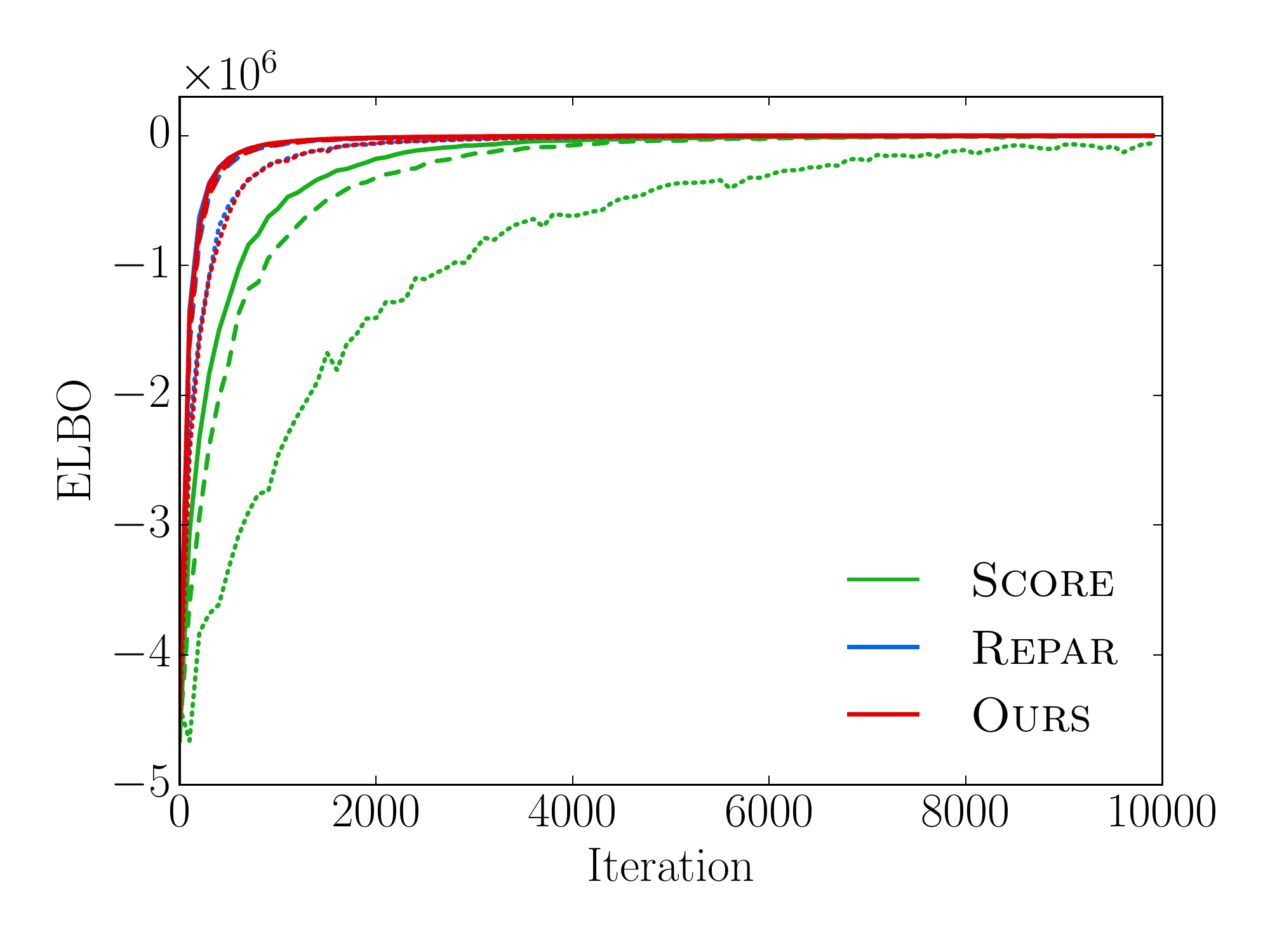}
    \caption{$\infl$ ($\text{stepsize}=0.01$)}
    \label{fig:time-0.01}
  \end{subfigure}
  \caption{
    The $\ELBO{}$ objective as a function of the iteration number.
    \{dotted, dashed, solid\} lines represent $\{N=1, N=8, N=16\}$.
  }
  \label{fig:elbo}
\end{figure}

{\def\arraystretch{1.4}
\begin{table}[t]
  \center
  \begin{tabular}{c|ccc}
    \hline
    Estimator & $\tcl$ & $\sns$ & $\infl$ \\ \hline
    $\score$ & $21.7$ & $4.9$ & $18.7$ \\
    $\repar$ & $46.1$ & $15.4$ & $251.4$ \\
    $\ours$  & $79.2$ & $24.9$ & $269.8$ \\ \hline
  \end{tabular}
  \vspace{5pt}
  \caption{Computation time (in ms) per iteration for $N=1$.}
  \label{tab:time}
\end{table}
}
%%%%%%%%%%%%%%%%%%%%%%%%%%%%%%%%%%%%%%%%%%%%%%%%%%%%%%%%%%%%%%%

\paragraph{Results.}
Table~\ref{tab:variance} compares the average variance of each estimator
for $N=1$, % $\text{stepsize}=0.01$
where the average is taken over a single optimization trajectory.
The table clearly shows that during the optimization process,
$\ours$ has several orders of magnitude (sometimes $<10^{-10}$ times) smaller variances
than $\score$.
Since $\ours$ computes additional terms when compared with $\repar$,
we expect that $\ours$ would have larger variances than $\repar$,
and this is confirmed by the table.
It is noteworthy, however, that for most benchmarks,
the averaged variances of $\ours$ are very close to those of $\repar$. % and sometimes smaller than
This suggests that the additional term $\RegionChange_\tht$ in our estimator
often introduces much smaller variances than the reparameterization term $\ReparamGrad_\tht$.
%
%% Some might think that such a result is counter-intuitive
%% because $\ours$ computes additional terms, compared with $\repar$,
%% using several more Monte Carlo samples.
%% We observe that
%% the variance of $\repar$ (and the $\ReparamGrad_\tht$ term in $\ours$)
%% is proportional to $-\ELBO{}$,
%% and the variance of the $\RegionChange_\tht$ term in $\ours$ is often much smaller
%% than that of $\ReparamGrad_\tht$.
%% For instance, for $\sns$ with $\text{stepsize}=0.01$ and $N=1$,
%% $\varcmp$ and $\varnrm$ of $\RegionChange_\tht$
%% are 16 and 5 times smaller on average than those of $\ReparamGrad_\tht$, respectively.
%% Since $\ours$ follows a trajectory mostly of a higher $\ELBO{}$ than $\repar$
%% (see Figure~\ref{fig:elbo}),
%% the averaged variances of $\ours$ can be smaller than those of $\repar$
%% by the above observations.

Figure~\ref{fig:elbo} shows the $\ELBO{}$ objective,
for different estimators with different $N$'s,
as a function of the iteration number. As expected, 
using a larger $N$ makes all estimators converge faster in a more stable manner. 
In all three benchmarks, $\ours$ outperforms (or performs similarly to) the other two and converges stably,
and $\repar$ beats $\score$. Increasing the stepsize to $0.01$ makes
$\score$ unstable in $\tcl$ and $\sns$. It is also worth noting that
$\repar$ converges to sub-optimal values in $\tcl$ (possibly because $\repar$ is biased). 

Table~\ref{tab:time} shows the computation time per iteration of each approach for $N=1$.
% to complete a single optimization trajectory for $N=1$.
% All of our experiments are run with a single thread.
Our implementation performs the worst in this wall-time comparison,
but the gap between $\ours$ and $\repar$ is not huge:
the computation time of $\ours$ is less than 1.72 times that of $\repar$ in all benchmarks.
Furthermore, we want to point out that our implementation is an early unoptimized prototype, and there are several rooms to improve in the implementation.
%% For instance, the implementation may subsample summands in \eqref{eqn:surface-clever} and avoid computing all $L$ terms when estimating the $\RegionChange_\tht$ term.
For instance, it currently constructs Python functions dynamically, and computes the gradients of these functions using {\tt autograd}. But this dynamic approach is costly because {\tt autograd} is not optimized for such dynamically constructed functions; this can also be observed in the bad performance of $\repar$, particularly in $\infl$, that employs the same strategy of dynamically constructing functions
and taking their gradients.
So one possible optimization is to avoid this gradient computation of dynamically
constructed functions by building the functions statically during compilation.

%auto-ignore

\section{Related Work}
\label{sec:related}

%% Two common examples of a model with a non-differentiable density are
%% (a) the one that uses if statements whose branch conditions contain continuous random variables, and
%% (b) the one that uses discrete random variables, typically together with continuous random variables.
%% Coming up with an efficient algorithm for stochastic variational inference for such a model,
%% especially (b), has been an active research topic. 
A common example of a model with a non-differentiable density is the one that uses discrete random variables, typically together with continuous random variables.\footnote{
  Another common example of such a model is the one
  that uses if statements whose branch conditions contain continuous random variables,
  which is the main focus of our work.
}
Coming up with an efficient algorithm for stochastic variational inference for such a model has been an active research topic. Maddison \emph{et al.}~\cite{MaddisonICLR17} and Jang \emph{et al.}~\cite{JangICLR17} proposed continuous relaxations of discrete random variables that convert non-differentiable variational objectives to differentiable ones and make the reparameterization trick applicable. Also, a variety of control variates for the standard score estimator~\cite{WilliamsMLJ1992,PaisleyICML12,WingateBBVI13,RanganathAISTATS14} for the gradients of variational objectives have been developed~\cite{RanganathAISTATS14,GuICLR16,GuICLR17,TuckerNIPS17,GrathwohlICLR18,MillerReparam2017}, some of which use biased yet differentiable control variates such that the reparameterization trick can be used to correct the bias~\cite{GuICLR16,TuckerNIPS17,GrathwohlICLR18}.

Our work extends this line of research by adding a version of the reparameterization trick that can be applied to models with discrete random variables.
For instance, consider a model $p(x,z)$ with $z$ discrete.
By applying the Gumbel-Max reparameterization~\cite{GumbelBook,MaddisonNIPS16} to $z$,
we transform $p(x,z)$ to $p(x,z,c)$,
where $c$ is sampled from the Gumbel distribution
and $z$ in $p(x,z,c)$ is defined deterministically from $c$ using the $\arg\max$ operation.
Since $\arg\max$ can be written as if statements,
we can express $p(x,z,c)$ in the form of \eqref{eqn:nondiff-density}
to which our reparameterization gradient can be applied. Investigating the effectiveness of this approach for discrete random variables is an interesting topic for future research.

The reparameterization trick was initially used with normal distribution~\cite{KingmaICLR14,RezendeICML14}, but its scope was soon extended to other common distributions, such as gamma, Dirichlet, and beta~\cite{Knowles15stochastic,RuizNIPS16,NaessethAISTATS17}. Techniques for constructing normalizing flow~\cite{RezendeICML15,KingmaNIPS16} can also be viewed as methods for creating distributions in a reparameterized form. In the paper, we did not consider these recent developments and mainly focused on the reparameterization with normal distribution. One interesting future avenue is to further develop our approach for these other reparameterization cases. We expect that the main challenge will be to find an effective method for handling the surface integrals in Theorem~\ref{thm:grad-eq-main}.

%Discontinuities - HMC - Perhaps

%auto-ignore

\section{Conclusion}
\label{sec:conclusion}

We have presented a new estimator for the gradient of the standard variational objective, $\ELBO{}$. The key feature of our estimator is that it can keep variance under control by using a form of the reparameterization trick even when the density of a model is not differentiable. The estimator splits the space of the latent random variable into a lower-dimensional subspace where the density may fail to be differentiable, and the rest where the density is differentiable. Then, it estimates the contributions of both parts to the gradient separately, using a version of manifold sampling for the former and the reparameterization trick for the latter. We have shown the unbiasedness of our estimator using a theorem for interchanging integration and differentiation under moving domain~\cite{FlandersAMM1973} and the divergence theorem. Also, we have experimentally demonstrated the promise of our estimator using three time-series models. One interesting future direction is to investigate the possibility of applying our ideas to recent variational objectives~\cite{MnihICML16,LiNIPS17,MaddisonNIPS17,LeICLR2018,NaessethAISTATS18}, which are based on tighter lower bounds of marginal likelihood than the standard $\ELBO{}$.

When viewed from a high level, our work suggests a heuristic of splitting the latent space into a bad yet tiny subspace and the remaining good one, and solving an estimation problem in each subspace separately. The latter subspace has several good properties and so it may allow the use of efficient estimation techniques that exploit those properties. The former subspace is, on the other hand, tiny and the estimation error from the subspace may, therefore, be relatively small. We would like to explore this heuristic and its extension in different contexts, such as stochastic variational inference with different objectives~\cite{MnihICML16,LiNIPS17,MaddisonNIPS17,LeICLR2018,NaessethAISTATS18}.

\subsubsection*{Acknowledgments}

We thank Hyunjik Kim, George Tucker, Frank Wood and anonymous reviewers for their helpful comments, and Shin Yoo and Seongmin Lee for allowing and helping us to use their cluster machines. This research was supported by the Engineering Research Center Program through the National Research Foundation of Korea (NRF) funded by the Korean Government MSIT (NRF-2018R1A5A1059921), and also by Next-Generation Information Computing Development Program through the National Research Foundation of Korea (NRF) funded by the Ministry of Science, ICT (2017M3C4A7068177).

\bibliography{refs}
\bibliographystyle{abbrv}

\newpage 

\appendix
%auto-ignore

%% make title
{
  \newcommand{\supptitle}{
    Supplementary Material:
    Reparameterization Gradient
    for Non-differentiable Models
  }
  % from nips_2018.sty
  \newcommand{\toptitlebar}{
    \hrule height 4pt
    \vskip 0.25in
    \vskip -\parskip%
  }
  \newcommand{\bottomtitlebar}{
    \vskip 0.29in
    \vskip -\parskip
    \hrule height 1pt
    \vskip 0.09in%
  }
  \vbox{%
    \hsize\textwidth
    \linewidth\hsize
    \vskip 0.1in
    \toptitlebar
    \centering
        {\LARGE\bf \supptitle \par}
    \bottomtitlebar
  }
}

\section{Proof of Theorem~\ref{thm:grad-eq-main}}
Using reparameterization, we can write $\ELBO{\tht}$ as follows:
\begin{align}
  \ELBO{\tht}
  &= \E{q(\bm \eps)}{\log \frac{\sum_{k=1}^K \indc{f_\tht (\bm \eps)\in R_{k}} \cdot r_k(f_\tht(\bm \eps))}{q_\tht(f_\tht(\bm \eps))}} \nonumber \\
  &= \E{q(\bm \eps)}{\sum_{k=1}^K \indc{f_\tht (\bm \eps)\in R_{k}} \cdot \log \frac{r_k(f_\tht(\bm \eps))}{q_\tht(f_\tht(\bm \eps))}} \label{eqn:elbo-derv} \\
  &= \sum_{k=1}^K \E{q(\bm \eps)}{\indc{f_\tht (\bm \eps)\in R_{k}} \cdot h_k(\bm{\eps},\tht)\phantom{\Big)\!\!\!\!}}. \nonumber
\end{align}
In \eqref{eqn:elbo-derv}, we can move the summation and the indicator function out of $\log$
since the regions $\{R_k\}_{1\leq k \leq K}$ are disjoint.
We then compute the gradient of $\ELBO{\tht}$ as follows:
\begin{align}
& \grad{\tht}{\ELBO{\tht}}
\nonumber
\\
&= \sum_{k=1}^K \grad{\tht} \E{q(\bm \eps)}{\indc{f_\tht (\bm \eps)\in R_{k}} \cdot h_k(\bm{\eps},\tht)\phantom{\Big)\!\!\!\!}}\nonumber\\
&= \sum_{k=1}^K \grad{\tht} \int_{f_\tht^{-1}( R_{k} )} q(\bm{\eps}) h_k(\bm{\eps},\tht) d\bm \eps \nonumber\\
&=
\sum_{k=1}^K
\int_{f_\tht^{-1}( R_{k} )}
\Big(q(\bm{\eps}) \grad{\tht}h_k(\bm{\eps},\tht)
+
\grad{\bm \eps} \bullet \big(q(\bm{\eps}) h_k(\bm{\eps},\tht) \bm{V}(\bm{\eps},\tht)\big)
\Big)d\bm \eps \label{eqn:grad-elbo-diffint} \\
&=
\E{q(\bm{\eps})}{\sum_{k=1}^K \indc{f_\tht(\bm{\eps}) \,{\in}\, R_k} \cdot \grad{\tht}{h_k(\bm{\eps},\tht)}}
+
\sum_{k=1}^K\int_{f_\tht^{-1}( R_{k} )} \grad{\bm \eps} \bullet \big(q(\bm{\eps}) h_k(\bm{\eps},\tht) \bm{V}(\bm{\eps},\tht)\big)d\bm \eps \nonumber\\
&=
\underbrace{\E{q(\bm{\eps})}{\sum_{k=1}^K \indc{f_\tht(\bm{\eps}) \,{\in}\, R_k} \cdot \grad{\tht}{h_k(\bm{\eps},\tht)}}}_{\ReparamGrad_\tht}
+
\underbrace{\sum_{k=1}^K \int_{f_\tht^{-1}(\partial R_{k})} \big(q(\bm{\eps}) h_k(\bm{\eps},\tht) \bm{V}(\bm{\eps},\tht)\big) \bullet d\bm{\Sigma}}_{\RegionChange_\tht}
\label{eqn:grad-elbo-divthm}
\end{align}
where $\grad{\bm{\eps}} \bullet \bm{U}$ denotes the column vector
whose $i$-th component is $\grad{\bm{\eps}} \cdot \bm{U}_i$,
the divergence of $\bm{U}_i$ with respect to $\bm{\eps}$.
\eqref{eqn:grad-elbo-divthm} is the formula that we wanted to prove.

The two non-trivial steps in the above derivation are
\eqref{eqn:grad-elbo-diffint} and \eqref{eqn:grad-elbo-divthm}.
First, \eqref{eqn:grad-elbo-diffint} is a direct consequence of
the following theorem, existing yet less well-known, on exchanging integration and differentiation
under moving domain:
\begin{theorem}
  \label{thm:diff-under-int-moving}
  Let $D_\theta \subset \R^n$ be a smoothly parameterized region.
  That is, there exist open sets $\Omega \subset \mathbb{R}^n$ and $\Theta \subset \mathbb{R}$,
  and twice continuously differentiable
  $\hat{\bm{\eps}}: \Omega \times \Theta \rightarrow \mathbb{R}^n$
  such that $D_\theta = \hat{\bm{\eps}}(\Omega, \theta)$ for each $\tht \in \Tht$.
  Suppose that $\hat{\bm{\eps}}(\cdot, \theta)$ is a $C^1$-diffeomorphism for each $\tht \in \Tht$.
  Let $f : \R^n \times \R \to \R$ be a differentiable function
  such that $f(\cdot,\tht) \in \intgl{D_\tht}$ for each $\tht \in \Tht$.
  %% when moving according to $\hat{\bm{\eps}}()$
  If there exists   $g : \Omega \to \R$
  such that $g \in \intgl{\Omega}$ and
  $\abs{\grad{\tht}\big(f(\hat{\bm{\eps}},\theta)
    \abs{\pdv{\hat{\bm{\eps}}}{\bm \omega}} \big)}
  \leq g(\bm{\omega})$
  for any $\tht \in \Tht$ and $\bm{\omega} \in \Omega$,
  then
  $$
  \grad{\theta} \int_{D_{\theta}}f(\bm{\eps},\tht) d \bm{\eps}
  =
  \int_{D_\theta}  \Big(\grad{\theta} f + \grad{\bm\eps} \cdot (f\bm{\mathrm{v}})\Big)(\bm{\eps},\tht)
  d \bm{\eps}.
  $$
  Here $\bm{\mathrm{v}}(\bm{\eps}, \theta)$ denotes
  $\grad{\theta} \hat{\bm{\eps}}(\bm{\omega},\tht)
  \big|_{{\bm{\omega}=\hat{\bm{\eps}}^{-1}_\tht(\bm{\eps})}}$,
  the velocity of the particle $\bm{\eps}$ at time $\theta$.
  %% when moving according to $\hat{\bm{\eps}}$.
\end{theorem}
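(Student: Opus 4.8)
The plan is to trade the moving domain $D_\tht$ for the fixed domain $\Omega$ by a change of variables, so that the ordinary rule for differentiating under the integral sign becomes applicable, and then to transport the resulting identity back to $D_\tht$. Concretely, since $\hat{\bm\eps}(\cdot,\tht)$ is a $C^1$-diffeomorphism with $D_\tht = \hat{\bm\eps}(\Omega,\tht)$, the change-of-variables formula gives
\[
\int_{D_\tht} f(\bm\eps,\tht)\,d\bm\eps
=
\int_{\Omega} f(\hat{\bm\eps}(\bm\omega,\tht),\tht)\,\abs{\pdv{\hat{\bm\eps}}{\bm\omega}}\,d\bm\omega,
\]
whose right-hand side is an integral over a domain that no longer depends on $\tht$. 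Here $\abs{\pdv{\hat{\bm\eps}}{\bm\omega}} = \abs{J}$ with $J \defeq \det\pdv{\hat{\bm\eps}}{\bm\omega}$.

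Next I would differentiate this right-hand side under the integral sign. Because $\Omega$ is fixed, this is the classical (dominated-convergence) Leibniz rule, whose three hypotheses are met here: the integrand lies in $\intgl{\Omega}$ for each $\tht$ (by $f(\cdot,\tht)\in\intgl{D_\tht}$ and the change of variables), its $\tht$-derivative exists for every $\bm\omega$ (by differentiability of $f$ and the $C^2$ regularity of $\hat{\bm\eps}$), and this derivative is bounded by the integrable, $\tht$-independent function $g$ --- which is precisely the hypothesis of the theorem, tailored to the pulled-back integrand. Hence
\[
\grad{\tht}\int_{\Omega} f(\hat{\bm\eps},\tht)\,\abs{\pdv{\hat{\bm\eps}}{\bm\omega}}\,d\bm\omega
=
\int_{\Omega} \grad{\tht}\Big(f(\hat{\bm\eps},\tht)\,\abs{\pdv{\hat{\bm\eps}}{\bm\omega}}\Big)\,d\bm\omega.
\]

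The crux, and the step I expect to be the main obstacle, is to expand the integrand $\grad{\tht}(f\,\abs{J})$ into the form $(\grad{\tht}f + \grad{\bm\eps}\cdot(f\bm{\mathrm{v}}))\abs{J}$. The scalar factor is easy: by the chain rule $\grad{\tht}f(\hat{\bm\eps},\tht) = (\grad{\bm\eps}f)\cdot\bm{\mathrm{v}} + \grad{\tht}f$, using $\grad{\tht}\hat{\bm\eps} = \bm{\mathrm{v}}(\hat{\bm\eps},\tht)$. The Jacobian factor is the delicate part: writing $\widetilde{\bm{\mathrm{v}}} \defeq \grad{\tht}\hat{\bm\eps}$ and invoking Jacobi's formula, $\grad{\tht}J = J\cdot\mathrm{tr}((\pdv{\hat{\bm\eps}}{\bm\omega})^{-1}\,\pdv{\widetilde{\bm{\mathrm{v}}}}{\bm\omega})$, where the $C^2$ regularity lets me interchange $\grad{\tht}$ with differentiation in $\bm\omega$. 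Since $\widetilde{\bm{\mathrm{v}}}(\bm\omega,\tht) = \bm{\mathrm{v}}(\hat{\bm\eps}(\bm\omega,\tht),\tht)$, the chain rule turns the matrix inside the trace into the similarity transform $(\pdv{\hat{\bm\eps}}{\bm\omega})^{-1}\,\pdv{\bm{\mathrm{v}}}{\bm\eps}\,\pdv{\hat{\bm\eps}}{\bm\omega}$, whose trace equals $\mathrm{tr}(\pdv{\bm{\mathrm{v}}}{\bm\eps}) = \grad{\bm\eps}\cdot\bm{\mathrm{v}}$. Thus $\grad{\tht}\abs{J} = \abs{J}\,(\grad{\bm\eps}\cdot\bm{\mathrm{v}})$ --- the sign of $J$ being constant since $\hat{\bm\eps}(\cdot,\tht)$ is a diffeomorphism, so the absolute value causes no trouble. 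Combining the two factors with the divergence product rule $(\grad{\bm\eps}f)\cdot\bm{\mathrm{v}} + f(\grad{\bm\eps}\cdot\bm{\mathrm{v}}) = \grad{\bm\eps}\cdot(f\bm{\mathrm{v}})$ yields the desired integrand.

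Finally I would undo the change of variables: substituting the expanded integrand back and changing coordinates from $\bm\omega$ to $\bm\eps = \hat{\bm\eps}(\bm\omega,\tht)$ cancels the $\abs{J}$ factor and produces $\int_{D_\tht}(\grad{\tht}f + \grad{\bm\eps}\cdot(f\bm{\mathrm{v}}))(\bm\eps,\tht)\,d\bm\eps$, which is exactly the claimed identity. Beyond the determinant computation, the only care needed is in verifying the dominated-convergence hypotheses rigorously, but since the stated bound on $g$ dominates precisely the quantity $\grad{\tht}(f(\hat{\bm\eps},\tht)\abs{J})$ that appears after the pullback, this verification should be routine.
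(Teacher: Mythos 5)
Your proof is correct, but note that the paper does not actually prove this theorem: it only cites Flanders~\cite{FlandersAMM1973} for the statement and a sketch, and Flanders' derivation runs through differential forms (a Lie-derivative/Cartan-formula computation for a moving chain, specialized to a top-dimensional domain where the exterior-derivative term drops and only the divergence term survives). Your route --- pulling the integral back to the fixed domain $\Omega$, applying the dominated-convergence Leibniz rule there, and expanding $\grad{\tht}\big(f\abs{J}\big)$ via the chain rule together with Jacobi's formula $\grad{\tht}J = J\,\mathrm{tr}\big((\pdv{\hat{\bm{\eps}}}{\bm{\omega}})^{-1}\pdv{}{\bm{\omega}}\grad{\tht}\hat{\bm{\eps}}\big)$ and the similarity-invariance of the trace --- is the standard elementary proof of the Reynolds transport theorem, and it is fully self-contained: the $L^1$ domination by $g$ is invoked exactly where the Leibniz rule needs it, the $C^2$ regularity is what lets you commute $\grad{\tht}$ with $\pdv{}{\bm{\omega}}$, and the diffeomorphism hypothesis is what makes $J$ nonvanishing so that $\abs{J}$ is differentiable. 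Two micro-points worth tightening: the sign of $J$ is only \emph{locally} constant in $(\bm{\omega},\tht)$ if $\Omega$ or $\Tht$ is disconnected, but local constancy already gives $\grad{\tht}\abs{J}=\abs{J}\,(\grad{\bm{\eps}}\cdot\bm{\mathrm{v}})$; and the divergence $\grad{\bm{\eps}}\cdot(f\bm{\mathrm{v}})$ in the conclusion tacitly uses differentiability of $f$ in $\bm{\eps}$, which the blanket differentiability assumption supplies. What your argument buys over the paper's bare citation is a complete, checkable proof under precisely the stated hypotheses; what the differential-forms route buys is the generalization to lower-dimensional moving surfaces, which is not needed for this statement.
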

The statement of Theorem~\ref{thm:diff-under-int-moving}
(without detailed conditions as we present above)
and the sketch of its proof
can be found in~\cite{FlandersAMM1973}.
One subtlety in applying Theorem~\ref{thm:diff-under-int-moving} to our case
is that $R_k$ (which corresponds to $\Omega$ in the theorem) may not be open,
so the theorem may not be immediately applicable.
However, since the boundary $\partial R_k$ has Lebesgue measure zero in $\R^n$,
ignoring the reparameterized boundary $f_\tht^{-1}(\partial R_k)$
in the integral of \eqref{eqn:grad-elbo-diffint} does not change the value of the integral.
Hence, we apply Theorem~\ref{thm:diff-under-int-moving} to
$D_\tht = \mathrm{int}(f_\tht^{-1}(R_k))$
(which is possible because $\Omega = \mathrm{int}(R_k)$ is now open),
and this gives us the desired result.
Here $\mathrm{int}(T)$ denotes the interior of $T$.

Second, to prove~\eqref{eqn:grad-elbo-divthm}, it suffices to show that
$$ \int_{V} \grad{\bm{\eps}} \bullet \bm{U}(\bm{\eps})  d\bm{\eps}
= \int_{\partial V} \bm{U}(\bm{\eps}) \bullet d\bm{\Sigma}$$
where $\bm U(\bm \eps) = q(\bm{\eps}) h_k(\bm{\eps},\tht) \bm{V}(\bm{\eps},\tht)$
and $V = f_\tht^{-1}( R_{k} )$.
To prove this equality, we apply the divergence theorem:
\begin{theorem}[Divergence theorem]
  \label{thm:div}
  Let $V$ be a compact subset of $\mathbb{R}^n$
  that has a piecewise smooth boundary $\partial V $.
  If $\bm{F}$ is a differentiable vector field defined on a neighborhood of $V$, then
  $$ \int_{V} (\grad{} \cdot \bm{F}) \, dV = \int_{\partial V} \bm{F} \cdot d\bm{\Sigma} $$
  where $d\bm{\Sigma}$ is the outward pointing normal vector of the boundary $\partial V$.
\end{theorem}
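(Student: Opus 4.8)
The plan is to obtain the divergence theorem as a special case of the generalized Stokes theorem on manifolds-with-boundary, which is the cleanest route in arbitrary dimension $n$. First I would associate to the vector field $\bm{F}=(F_1,\ldots,F_n)$ the differential $(n{-}1)$-form
\[
\omega \defeq \sum_{i=1}^n (-1)^{i-1} F_i \, dx_1 \wedge \cdots \wedge \widehat{dx_i} \wedge \cdots \wedge dx_n,
\]
where $\widehat{dx_i}$ indicates that the $i$-th factor is omitted. A direct computation of the exterior derivative gives $d\omega = (\nabla \cdot \bm{F})\, dx_1 \wedge \cdots \wedge dx_n$, so that $\int_V d\omega = \int_V (\nabla\cdot\bm{F})\,dV$, which is exactly the left-hand side of the claim.

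Next I would invoke the generalized Stokes theorem $\int_V d\omega = \int_{\partial V}\omega$, where $\partial V$ carries the boundary orientation induced from $V$. The remaining work is to identify the pullback of $\omega$ to $\partial V$ with the flux integrand $\bm{F}\cdot d\bm{\Sigma}$. This is a pointwise linear-algebra check: at a boundary point with outward unit normal $\bm{\nu}$ and induced surface measure $dS$, one verifies $\omega|_{\partial V} = (\bm{F}\cdot\bm{\nu})\,dS$, provided the Stokes boundary orientation is the one whose normal points outward from $V$. Chaining the two identities yields $\int_V (\nabla\cdot\bm{F})\,dV = \int_{\partial V}\bm{F}\cdot d\bm{\Sigma}$, as desired.

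The main obstacle is not the form calculus but the hypothesis that $\partial V$ is only \emph{piecewise} smooth, so that $V$ may have edges and corners where $\bm{\nu}$ is undefined and where the clean version of Stokes does not apply directly. I would handle this by observing that the singular locus of $\partial V$ has $(n{-}1)$-dimensional measure zero and hence contributes nothing to $\int_{\partial V}$, and then either exhausting $V$ from the inside by compact regions with smooth boundary, or decomposing $V$ into finitely many pieces each equivalent to a region with smooth boundary and noting that the flux across the interior cutting surfaces cancels in pairs because adjacent pieces see opposite outward normals. A dominated-convergence argument, justified by the differentiability of $\bm{F}$ on a neighborhood of the compact set $V$, then transfers the identity to the limit. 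An entirely elementary alternative avoids forms altogether: one first proves the statement for regions that are simultaneously \emph{simple} in every coordinate direction, reducing each $\int_V \partial_i F_i \, dV$ to a one-dimensional fundamental-theorem-of-calculus computation between the two bounding graphs, and then patches a general $V$ together from such pieces; here too the patching and the careful bookkeeping of orientations at the cuts are where the real effort lies.
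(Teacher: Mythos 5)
Your proof is correct, but there is nothing in the paper to compare it against: Theorem~\ref{thm:div} is stated there as a classical black-box result (it is invoked to justify Equation~\eqref{eqn:grad-elbo-divthm}) and the authors give no proof of it. Your route --- encoding $\bm{F}$ as the $(n{-}1)$-form $\omega=\sum_i(-1)^{i-1}F_i\,dx_1\wedge\cdots\wedge\widehat{dx_i}\wedge\cdots\wedge dx_n$, checking $d\omega=(\nabla\cdot\bm{F})\,dx_1\wedge\cdots\wedge dx_n$, applying the generalized Stokes theorem, and identifying the restriction of $\omega$ to $\partial V$ with the flux integrand $(\bm{F}\cdot\bm{\nu})\,dS$ --- is the standard modern derivation, and your sign bookkeeping (the $i-1$ transpositions cancelling the $(-1)^{i-1}$) is right. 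You also correctly flag the only genuinely delicate point under the stated hypotheses, namely that $\partial V$ is merely piecewise smooth, and both of your remedies (discarding the measure-zero singular locus and exhausting or decomposing $V$ into pieces whose interior cut fluxes cancel, or falling back to the elementary ``simple region'' argument via the one-dimensional fundamental theorem of calculus) are the standard ways to close that gap. The one thing worth noting relative to the paper's actual use of the theorem is that the authors apply it to a possibly non-compact region $V=f_\tht^{-1}(R_k)$ and handle that by an approximation argument with compactly supported test functions converging in the Schwartz space; that extension is outside the statement you were asked to prove, so its absence from your write-up is not a defect.
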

In our case, the region $V=f_\tht^{-1}(R_k)$ may not be compact,
so we cannot directly apply Theorem~\ref{thm:div} to $\bm{U}$.
To circumvent the non-compactness issue,
we assume that $q(\bm{\eps})$ is in $\mathcal{S}(\R^n)$, the Schwartz space on $\R^n$.
That is, assume that every partial derivative of $q(\bm{\eps})$ of any order
decays faster than any polynomial.
This assumption is reasonable in that
the probability density of many important probability distributions
(e.g., the normal distribution)
is in $\mathcal{S}(\R^n)$.
Since $q \in \mathcal{S}(\R^n)$,
there exists a sequence of test functions $\{\phi_j\}_{j\in \mathbb{N}}$
such that each $\phi_j$ has compact support
and $\{\phi_j\}_{j \in \N}$ converges to $q$ in $\mathcal{S}(\R^n)$,
which is a well-known result in functional analysis.
Since each $\phi_j$ has compact support,
so does $\bm{U}^j(\bm{\eps}) \defeq \phi_j(\bm{\eps}) h_k(\bm{\eps},\tht) \bm{V}(\bm{\eps},\tht)$.
By applying Theorem~\ref{thm:div} to $\bm{U}^j$,
we have
$$
\int_V \grad{\bm{\eps}} \bullet \bm U^j(\bm{\eps}) d\bm{\eps}
= \int_{\partial V} \bm U^j (\bm{\eps}) \bullet d\bm{\Sigma}.
$$
Because $\{\phi_j\}_{j \in \N}$ converges to $q$ in $\mathcal{S}(\R^n)$,
taking the limit $j \rightarrow \infty$ on the both sides of the equation
gives us the desired result.

\section{Proof of Theorem~\ref{thm:est-surface-integral}}
%
% Hongseok: In the theorem, I replaced Lebesgue measurability by measurability.
% Implicitly I have been assuming Borel sigma algebra and Borel measurability.
% Since each Borel measurable set is also Lebesgue measurable, I think that
% we can keep this implicit convention.
%
Theorem~\ref{thm:est-surface-integral}
is a direct consequence of the  following theorem called ``area formula'':
\begin{theorem}[Area formula]
  \label{thm:area-formula}
  Suppose that $g: \R^{n-1} \to \R^n$ is injective and Lipschitz.
  If $A \subset \R^{n-1}$ is measurable and $\bm H : \R^n \to \R^n$ is measurable,
  then
  $$
  \int_{g(A)} \bm H(\bm \eps) \cdot d\bm{\Sigma}
  =
  \int_A \Big(\bm H(g(\bm \zeta)) \cdot \bm{n}(\zeta)\Big) \abs{Jg(\bm \zeta)} \, d\bm\zeta
  $$
  where $Jg(\bm \zeta) =
  \det \Big[ \pdv{g(\bm \zeta)}{\bm{\zeta}_1} \big| \pdv{g(\bm \zeta)}{\bm{\zeta}_2} \big|
    \cdots \big| \pdv{g(\bm \zeta)}{\bm{\zeta}_{n-1}} \big| \bm n(\bm \zeta) \Big]$,
  and $\bm n(\bm \zeta)$ is the unit normal vector of the hypersurface $g(A)$
  at $g(\bm \zeta)$ such that it has the same direction as $d\bm{\Sigma}$.
\end{theorem}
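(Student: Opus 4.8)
The plan is to obtain the identity as a \emph{direct consequence of the classical area formula} of geometric measure theory (as found in, e.g., Federer or Evans--Gariepy), which I will cite rather than reprove. Concretely, I would reduce the oriented (flux) surface integral on the left to an ordinary scalar integral against the $(n{-}1)$-dimensional Hausdorff measure $\mathcal{H}^{n-1}$, apply the classical change-of-variables formula for an injective Lipschitz map, and finally reconcile the Gram-determinant Jacobian appearing there with the $n\times n$ determinant $Jg$ used in the statement. The only genuinely new computation is this last reconciliation.

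First I would rewrite the left-hand side using the definition of the oriented surface element, $d\bm\Sigma = \bm n(\bm\eps)\, d\mathcal{H}^{n-1}(\bm\eps)$, where $\bm n$ is the unit normal oriented to agree with $d\bm\Sigma$. This gives
\[
\int_{g(A)} \bm H(\bm\eps) \cdot d\bm\Sigma
= \int_{g(A)} \big(\bm H(\bm\eps) \cdot \bm n(\bm\eps)\big)\, d\mathcal{H}^{n-1}(\bm\eps),
\]
turning the integrand into a scalar measurable function on the rectifiable hypersurface $g(A)$. I would then invoke the classical area formula: for injective Lipschitz $g : \R^{n-1}\to\R^n$, measurable $A\subset\R^{n-1}$, and integrable scalar $u$,
\[
\int_{g(A)} u\big(g^{-1}(\bm\eps)\big)\, d\mathcal{H}^{n-1}(\bm\eps)
= \int_A u(\bm\zeta)\, J_{n-1}g(\bm\zeta)\, d\bm\zeta,
\qquad
J_{n-1}g(\bm\zeta) \defeq \sqrt{\det\!\big(Dg(\bm\zeta)^\intercal Dg(\bm\zeta)\big)},
\]
where $Dg$ is the $n\times(n{-}1)$ Jacobian whose columns are $\partial g(\bm\zeta)/\partial\bm\zeta_i$ (which exist a.e.\ by Rademacher's theorem, enough for an integral identity), and injectivity collapses the multiplicity sum to a single term. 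Choosing $u(\bm\zeta) = \bm H(g(\bm\zeta)) \cdot \bm n(\bm\zeta)$ already yields the right-hand side of the claim up to the form of the Jacobian.

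It then remains to prove the linear-algebra identity $\abs{Jg(\bm\zeta)} = J_{n-1}g(\bm\zeta)$ for a.e.\ $\bm\zeta$. Let $v_i \defeq \partial g(\bm\zeta)/\partial\bm\zeta_i$ and let $M$ be the $n\times n$ matrix with columns $v_1,\dots,v_{n-1},\bm n$, so that $Jg = \det M$. Since $\bm n$ is a unit vector orthogonal to every $v_i$, the Gram matrix $M^\intercal M$ is block diagonal: its leading $(n{-}1)\times(n{-}1)$ block is $Dg^\intercal Dg$ (entries $v_i\cdot v_j$) and its trailing $1\times 1$ block is $\bm n\cdot\bm n = 1$. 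Hence $(Jg)^2 = \det(M^\intercal M) = \det(Dg^\intercal Dg)$, so $\abs{Jg} = \sqrt{\det(Dg^\intercal Dg)} = J_{n-1}g$; substituting this into the previous display gives the stated formula.

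The step I expect to be most delicate is the orientation bookkeeping. The unit normal $\bm n$ is determined only up to sign at $\mathcal{H}^{n-1}$-almost every point of $g(A)$, and replacing $\bm n$ by $-\bm n$ flips both the factor $\bm H\cdot\bm n$ and the determinant $Jg$. What keeps the final formula well posed is that it combines $\abs{Jg}$ (sign-free) with the specific orientation of $\bm n$ inherited from $d\bm\Sigma$; I would verify that, with this synchronized choice, the product $(\bm H\cdot\bm n)\,\abs{Jg}$ is insensitive to the residual sign and matches the oriented left-hand integral. The supporting facts---a.e.\ differentiability of Lipschitz maps, measurability of $\bm n$ on the rectifiable set $g(A)$, and integrability of $\bm H\cdot\bm n$ (split into positive and negative parts)---are standard and I would cite them.
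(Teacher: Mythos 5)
Your proposal is correct, but it is worth noting that the paper itself does not prove this statement at all: it simply remarks that a more general version can be found in the cited reference on manifold sampling, and moves on. What you supply is precisely the bridge that the paper leaves implicit, namely the passage from the classical scalar area formula of geometric measure theory, with Jacobian $\sqrt{\det(Dg^\intercal Dg)}$, to the oriented flux form with the $n\times n$ determinant $Jg$ used here. Your reduction $d\bm\Sigma = \bm n\, d\mathcal{H}^{n-1}$, the invocation of the injective-Lipschitz change of variables (with Rademacher giving a.e.\ differentiability and injectivity killing the multiplicity function), and the block-Gram computation $\det(M^\intercal M) = \det(Dg^\intercal Dg)\cdot(\bm n\cdot\bm n)$ yielding $\abs{Jg} = \sqrt{\det(Dg^\intercal Dg)}$ are all sound; the only pedantic caveat is that the columns of $Dg$ span the tangent space (so that $\bm n$ is orthogonal to them) only where $Dg$ has full rank, but at rank-deficient points both Jacobians vanish and the identity holds trivially. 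Your orientation discussion is also the right thing to worry about and resolves correctly: the statement pins $\bm n$ to the direction of $d\bm\Sigma$ and uses $\abs{Jg}$, so no residual sign ambiguity survives. In short, your argument buys an actual self-contained derivation (modulo the standard GMT citation) where the paper offers only a pointer, and it makes explicit why the seemingly ad hoc $n\times n$ determinant $Jg$ is the correct surface-area factor; this is a useful addition rather than a deviation.
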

A more general version of Theorem~\ref{thm:area-formula}
can be found in~\cite{DiaconisMainfold13}.
In our case,
the hypersurface $g(A)$ for the surface integral on the LHS
is given by $\{\bm{\eps} \,\mid\, \bm{a}\cdot\bm{\eps}=c \}$,
so we use $A=\R^{n-1}$ and
$g(\bm{\zeta}) = \big(\bm{\zeta}_1,\ldots,\bm{\zeta}_{j-1},
\frac{1}{\bm{a}_j}(c-\bm{a}_{-j}\cdot\bm{\zeta}),
\bm{\zeta}_{j},\ldots,\bm{\zeta}_{n-1}\big)^\intercal$
and apply Theorem~\ref{thm:area-formula} with $\bm{H}(\bm{\eps}) = q(\bm{\eps})\bm{F}(\bm{\eps})$.
In this settings,
$\bm{n}(\bm{\zeta})$ and $\abs{Jg(\bm{\zeta})}$ are calculated as
$$
\bm{n}(\bm{\zeta}) = \sgn{-\bm{a}_j} \frac{\abs{\bm{a}_j}}{\|\bm{a}\|_2}
 \Big(\frac{\bm{a}_1}{\bm{a}_j},\ldots,\frac{\bm{a}_{j-1}}{\bm{a}_j},1,\frac{\bm{a}_{j+1}}{\bm{a}_j},\ldots,\frac{\bm{a}_n}{\bm{a}_j}\Big)^\intercal
\quad\tand\quad
\abs{Jg(\bm{\zeta})} = \frac{\|\bm{a}\|_2}{\abs{\bm{a}_j}},
$$
and this gives us the desired result.

\end{document}